\renewcommand\bibentry[1]{\nocite{#1}{\frenchspacing\@nameuse{BR@r@#1\@extra@b@citeb}}}
\newcommand{\eq}[1]{\begin{align*}#1\end{align*}}
\newcommand{\eqq}[1]{\begin{align}#1\end{align}}
\def\w{\mathbf{w}}
\def\x{\mathbf{x}}
\def\E{\mathbb{E}}
\def\R{\mathbb{R}}
\def\V{\mathbb{V}}
\def\X{\mathbf{X}}
\def\A{\mathbf{A}}
\def\D{\mathbf{D}}
\def\d{\mathbf{d}}
\def\b{\mathbf{b}}
\def\P{\mathbf{P}}
\def\I{\mathbf{I}}
\def\R{\mathbb{R}}
\def\X{\mathbf{X}}
\def\r{\mathbf{r}}
\def\bGamma{\mathbf{\Gamma}}
\def\e{\mathbf{e}}
\def\X{\mathbf{X}}
\def\bGamma{\mathbf{\Gamma}}
\def\T{\mathbf{\Theta}}
\def\P{\mathbf{P}}
\def\K{\mathbf{K}}
\def\D{\mathbf{D}}
\def\e{\mathbf{e}}
\def\X{\mathbf{X}}
\def\T{\mathbf{\Theta}}
\def\P{\mathbf{P}}
\def\D{\mathbf{D}}
\def\S{\mathscr{S}}
\def\calS{\mathscr{S}}
\def\calA{\mathscr{A}}
\def\R{\mathbb{R}}
\def\calP{\mathscr{P}}
\def\bLambda{\mathbf{\Lambda}}
\def\calH{\mathscr{H}}
\def\z{\mathbf{z}}
\def\x{\mathbf{x}}
\def\e{\mathbf{e}}
\def\w{\mathbf{w}}
\def\D{\mathbf{D}}
\def\P{\mathbf{P}}
\def\X{\mathbf{X}}
\def\I{\mathbf{I}}
\def\A{\mathbf{A}}
\def\b{\mathbf{b}}
\def\d{\mathbf{d}}
\def\r{\mathbf{r}}
\def\z{\mathbf{z}}
\def\x{\mathbf{x}}
\def\e{\mathbf{e}}
\def\vphi{\mathbf{\varphi}}
\def\w{\mathbf{w}}
\def\D{\mathbf{D}}
\def\P{\mathbf{P}}
\def\X{\mathbf{X}}
\def\I{\mathbf{I}}
\def\A{\mathbf{A}}
\def\b{\mathbf{b}}
\def\d{\mathbf{d}}
\def\r{\mathbf{r}}
\def\TD{\operatorname{TD}}
\def\Q{{\operatorname{Q}}}
\def\V{{\operatorname{V}}}
\theoremstyle{plain}
\newtheorem{theorem}{Theorem}[section]
\newtheorem{proposition}[theorem]{Proposition}
\theoremstyle{definition}
\theoremstyle{remark}
\definecolor{yellow}{rgb}{0.75,0.65,0.15}
\def\yellow#1{\textcolor{yellow}{\textbf{#1}}}
\definecolor{turquoise}{rgb}{0.3,0.7,0.7}
\def\turquoise#1{\textcolor{turquoise}{\textbf{#1}}}
\definecolor{dark turquoise}{HTML}{01665e}
\definecolor{brown}{rgb}{0.6,0.3,0.}
\def\brown#1{\textcolor{brown}{\textbf{#1}}}
\definecolor{light brown}{HTML}{bf812d}
\title{Selective Credit Assignment}
\author[1]{Veronica Chelu}
\author[2]{Diana Borsa}
\author[1,2,3]{Doina Precup}
\author[2]{Hado van Hasselt}
\affil[1]{McGill University, Mila}
\affil[2]{DeepMind}
\affil[3]{Canada CIFAR AI Chair}
\begin{abstract}
Efficient credit assignment is essential for reinforcement learning algorithms in both prediction and control settings. We describe a unified view on temporal-difference algorithms for \emph{selective credit assignment}. These selective algorithms apply weightings to quantify the contribution of learning updates. We present insights into applying weightings to value-based learning and planning algorithms, and describe their role in mediating the backward credit distribution in prediction and control. Within this space, we identify some existing online learning algorithms that can assign credit selectively as special cases, as well as add new algorithms that assign credit backward in time counterfactually, allowing credit to be assigned off-trajectory and off-policy.
\end{abstract}
\begin{document}
\maketitle

\newcommand{\expect}[2]{\mathds{E}_{{#1}} \left[ {#2} \right]}
\newcommand{\myvec}[1]{\boldsymbol{#1}}
\newcommand{\myvecsym}[1]{\boldsymbol{#1}}
\newcommand{\vx}{\myvec{x}}
\newcommand{\vy}{\myvec{y}}
\newcommand{\vz}{\myvec{z}}
\newcommand{\vtheta}{\myvecsym{\theta}}

\section{Introduction}
\label{introduction}
In reinforcement learning (RL)  \citep{sutton2018} an agent must assign credit or blame for the rewards it obtains to past states and actions. This problem is difficult because rewards may be sparse and may occur much later than the events that helped cause them. Moreover, the agent's observations are typically noisy or aliased, further complicating its reasoning about the root causes of observed rewards. Effective credit assignment across long stretches of time in complex environments remains a largely unsolved and actively pursued research problem \citep{Harutyunyan2019HindsightCA, Hung2019OptimizingAB, ArjonaMedina2019RUDDERRD, Ke2018SparseAB, Mesnard2020CounterfactualCA, chelu2020forethought}. 
 
In this paper, we describe a generic way of adding selectivity in online credit assignment, leading to a unified view of the space of algorithms available. We present insights into the effect of  non-uniformly weighting the learning updates of value-based algorithms to improve credit assignment. 
  
As an example, consider Fig.~\ref{fig:ms_pacman_intro}, which contains results of agents playing the Atari game of Ms.Pac-Man.  The baseline performance (\brown{brown}) is due to an agent that updates its action values for each transition using a form of Q-learning \citep{watkins1992q} with expected eligibility traces \citep{van2020expected} (algorithm details are in later sections). We then consider a modified version of the game, where some of the observations are very noisy---the idea is that this is similar to a hardware camera on a robot that occasionally adds substantial noise, for instance due to a faulty cable.  The same algorithm performs far worse on this new version of the game (\yellow{yellow} line at the bottom).
However, we recover the baseline performance if we weight the updates (and appropriately modify the algorithm), despite the impoverished input signal (\turquoise{turquoise} line).

\begin{figure}[t!]
\begin{center}
      \includegraphics[scale=.4]{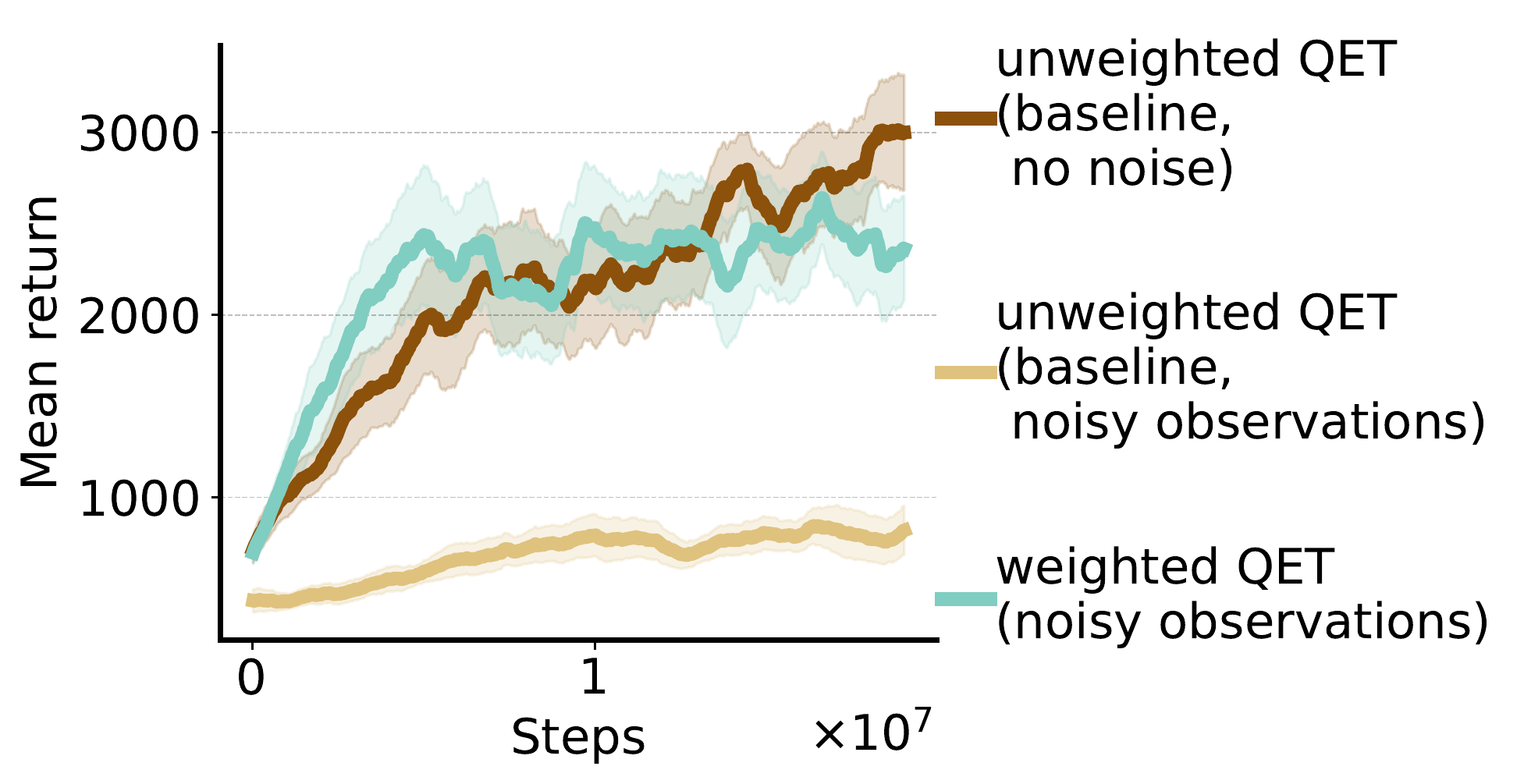}
    \end{center}
\caption{\label{fig:ms_pacman_intro}
\small \textbf{Atari Ms.Pac-Man:}
The curves show mean returns on Ms. Pac-Man for Q-learning with expected eligibility traces (QET), in different settings and with different weightings. The \emph{baseline} QET algorithm updates its values using a uniform weight on each transition. In the \emph{noisy} runs, Gaussian noise is added to the observations with probability $\epsilon=0.5$. The \emph{weighted} algorithm down-weights transitions with noisy inputs, and adapts the trace parameter $\lambda$ to be aware of these non-uniform weightings. Shaded areas denote standard error over $5$ seeds. These experiments demonstrate that uniform weightings fail in the noisy setting, and performance can be recovered with more careful weighting.
}
\end{figure}

The Q-learning algorithms illustrated in Fig.~\ref{fig:ms_pacman_intro} use a form of \emph{temporal difference (TD) learning}~\citep{Sutton:1988} to learn predictions online from sampled experience by \emph{bootstrapping} on other predictions \citep{sutton2018}. In such algorithms, credit is assigned online via an eligibility trace \citep{Sutton:1988, Peng1996IncrementalMQ, Hasselt2015LearningTP, sutton2018}, which keeps track of how the parameters of the value function estimator should be adjusted when a prediction error occurs to appropriately correct predictions made on earlier time steps. Canonical TD algorithms use recency and frequency to determine appropriate credit assignment \citep{Sutton1984TemporalCA}; frequent states are updated more, and preceding states that are temporally closer to a prediction error receive more blame for this error. 
 
Our \textbf{first contribution} is an analysis of stability of arbitrary weightings for on-policy algorithms, showing how to ensure stable convergent learning. Previous work has mainly focused on weightings for the off-policy case, or overlooked stability, perhaps because it is not broadly acknowledged that even on-policy algorithms can diverge with a non-uniform weighting. We discuss a simple weighting that ensures convergence and discuss how this on-policy weighting is connected to the idea of emphasis \citep{sutton2016emphatic} which was proposed to stabilize off-policy TD.
 
\textbf{Second}, we provide examples of weightings that greatly enhance learning, e.g., in the face of noisy observations, and show that the analysis mentioned above informs concrete algorithms with improved performance. 

In RL, we have \emph{learning} algorithms which use experiential data, and \emph{planning} algorithms which are driven by a model. We consider the online planning algorithm proposed by \citet{hasselt2020}, using \emph{on-policy expected eligibility traces} (ET) in place of the instantaneous traces of standard TD, to propagate credit backward not just to the states that occur on the current trajectory but also to other \emph{possible} trajectories leading to the current state. This allows counterfactual credit assignment to states that \emph{could} have happened, but did not occur recently on the current trajectory.
Our \textbf{third contribution}, is to provide insights on using selectivity with planning algorithms based on expected traces to improve stability, data efficiency and knowledge reuse. Specifically, we show how selectivity can be used to learn \emph{off-policy expected traces}, online from a single stream of experience and, separately, how it can be used for \emph{sparse credit assignment} which of interest for hierarchical learning. We provide concrete examples of weightings for expected traces that enhance planning, e.g., in the face of noisy observations.  As an \textbf{additional contribution}, we provide a more computationally efficient version of the value-based control planning algorithm QET \citep{hasselt2020} (the planning counterpart of the learning algorithm Q($\lambda$)), saving a factor of $|\mathcal{A}|$ --- the number of actions, true for all implementations, which can be significant in practice.
\section{Background and preliminaries}
\label{background and preliminaries}
We denote random variables with uppercase (e.g., $S$) and the obtained values with lowercase letters (e.g., $S = s$). Multi-dimensional functions or vectors are bolded (e.g., $\b$), as are matrices (e.g., $\A$). For state-dependent functions, we also allow time-dependent shorthands (e.g., $\gamma_t\!=\! \gamma(S_t)$).


\subsection{Reinforcement learning problem setup}
We consider the usual RL setting of an agent interacting with an environment, modelled as an infinite horizon \emph{Markov Decision Process} (MDP) $(\calS, \calA, P, r)$, with a finite state space $\calS$, a finite action space $\calA$, a state-transition distribution $P\!:\! \calS \!\times \!\calA \to \calP(\calS)$---with $\calP(\calS)$ the set of probability distributions on $\calS$ and $P(s^\prime|s, a)$ the probability of transitioning to state $s^\prime$ from $s$ by choosing action $a$, and a reward function $r : \calS \times \calA \rightarrow \R$. A policy $\pi : \calS \rightarrow \calP(\calA)$ maps states to distributions over actions; $\pi(a|s)$ is the probability of choosing action $a$ in state $s$ and $\pi(s)$ is the probability distribution of actions in state $s$. Let $S_t, A_t, R_t$ denote the random variables of state, action and reward at time $t$, respectively.

The goal of \emph{policy evaluation} is to estimate the \emph{value function} $V_\pi$, defined as the expectation of the discounted return under policy $\pi$:
\begin{align}
G_t &= R_{t+1} + \gamma_{t+1} G_{t+1} \nonumber\,,\\
\V_{\pi}(s) & \equiv \mathbb{E}_\pi [ G_t \mid S_t=s] \,,\label{eq:learning_target}
\end{align} 
where $\gamma : \calS \to [0, 1]$ is a discount factor and $\mathbb{E}_\pi[\cdot]$ denotes the expectation over trajectories sampled under $\pi$. 
In the function approximation setting, we update the parameters $\w$ of a function $\V_{\w}$ to estimate $\V_\pi$. For any $f$, we use $\nabla f_{\w_t}$ as shorthand for the gradient $\nabla_{\w}f_{\w}$ evaluated at $\w=\w_t$. 

For \emph{off-policy} policy evaluation,
the goal is to estimate $V_\pi$ whilst interacting with the MDP by sampling actions according to a different behaviour policy $\mu$.
In \emph{control}, the learner's goal is to find a policy $\pi$ that maximizes the value $\V$. Value-based methods for control \citep[e.g., Q-learning;][]{watkins1992q} use state-action value functions to learn implicit (e.g., greedy) policies.

\subsection{Online credit assignment}
We start with credit assignment algorithms used for learning value functions of a given policy---the policy evaluation setting---after which we look at methods that adapt to maximize performance---the control setting.

\subsubsection{Learning and planning for evaluation}
\paragraph{TD($\lambda$)}{
A popular and effective algorithm to learn $\V_{\w} \approx \V_{\pi}$ online and on-trajectory, is TD($\lambda$) \cite{Sutton:1988}:
\begin{align}
    \w_{t+1} &= \w_t + \alpha^{\w}_t \Delta_t^\V\,, \text{ with } \Delta^\V_t \equiv \e_t \delta_t\,, \text{ and}    \label{eq:TD_lambda_update}
    \\
    \e_t &= \gamma_t \lambda_t {\e}_{t-1} + \nabla\V_{\w_t}(S_t) \,,
\end{align}
where $\w \in \mathbb{R}^d$ are parameters of $\V_{\w}$ to be updated, $\Delta^\V_t = \e_t \delta_t$ is an update with \emph{TD error} $\delta_t = R_{t+1} + \gamma_t \V_{\w_t}(S_{t+1}) - \V_{\w_t}(S_t)$ and \emph{accumulating eligibility trace} $\e_t$, and $\alpha^{\w}_t\in (0,1)$ is a (possibly time-varying) step-size parameter.  For instance, $\w_t$ could be the weights of a neural network, or of a linear function of a feature mapping $\x(s)$, s.t. $\V_{\w}(s) = \w^\top \x(s)$.
The trace-decay parameter $\lambda_t \in [0, 1]$ interpolates between one-step TD learning and Monte-Carlo methods.  Several variations exist \citep[e.g.,][]{Maei:2011,Sutton:2014,vanSeijen:2014,Hasselt2015LearningTP}.  For clarity, we focus on the canonical variant above.
}

\paragraph{ETD($\lambda$)}{
The ETD($\lambda$) algorithm  \cite{sutton2016emphatic} was introduced for correcting TD algorithms when learning is off-policy, i.e. when the learning distribution differs from the sampling distribution due to a discrepancy between the behaviour and the target policy. This solution involves weighting the trace using a history-dependent function:
\begin{align}
\e_t &= \gamma_t \lambda_t \e_{t-1} + \rho_t M_t \nabla \V_{\w_t}(S_t)\label{eq:offpolicy_weighting_mixture_MC}\,, \text{ with } \\
M_t &= \lambda_t i_t + (1-\lambda_t) F_t \nonumber\,,\text{ and } 
F_t = \gamma_t \rho_{t-1} F_{t-1} + i_t\,,
\end{align}
where $M$ is the \emph{emphasis}, $F$ is the \emph{follow-on trace}, $i$ is a non-negative arbirary \emph{interest} function, originally introduced to focus learning \cite{Mahmood2015EmphaticTL, sutton2016emphatic}, and $\rho_t =\pi(A_t|S_t)/\mu(A_t|S_t)$ is an importance sampling ratio between the target policy $\pi$ and the behaviour policy $\mu$. ETD algorithms optimize the \emph{excursion} objective, defined as the value error under the stationary distribution of the behaviour policy $d_{\mu}$. \citet{zhang2020learning} consider learning the expectation of the follow-on trace $f(s) = \E_\mu[F_t|S_t=s]$, and using it directly in place of the history-dependent weighting. Similarly, \citet{ray2021} use an expectation of the $n$-step follow-on trace.
}
\paragraph{ET($\lambda$)}{
\emph{Expected eligibility trace} (ET) algorithms \citep{hasselt2020} have been introduced for \emph{off-trajectory}, \emph{on-policy} value learning, replacing the instantaneous trace ${\e}_t$ in Eq.~\eqref{eq:TD_lambda_update} with an estimated expectation:
\begin{align}
\z_{\T}(s) & \approx \E_{\pi}\left[{\e}_{t} \mid S_{t} = s\right]\,. \label{eq:EET}
\end{align}
Expected traces can be thought of as a true expectation model, and approximations thereof can be learned by regressing on the instantaneous eligibility trace $\e_t$:
\begin{align}
\T_{t+1} &= \T_{t} + \alpha^{\T}_t \Delta^{\z}_t\,,\text{ with}\label{eq:learning_traces}
\\
\Delta^{\z}_t &\equiv\frac{\partial {\z}_{\T_t}(S_{t})}{\partial \T_t}\left(\e_t -{\z}_{\T_t}(S_{t})\right)\nonumber\,,
\end{align}
with step-size parameter $\alpha^{\T}_t$. Expected traces can also be learned by \emph{backward} or \emph{time-reversed} TD, leading to multi-step updates similar to the TD($\lambda$) version of TD (see \cref{A:Expected eligibility traces} for details).
}

\subsubsection{Learning and planning for control}
For value-based control, we consider multi-step value-based analogs of TD($\lambda$) and ET($\lambda$). The goal is to learn action-values $\Q_{\w}(s, a)$, rather than state-values $\V_{\w}(s)$ we used thus far in our exposition, so that we can then derive a greedy policy $\arg\max_a \Q_{\w}(s, a)$ with respect to those values. 

\paragraph{Q($\lambda$)}{
The Q($\lambda$) algorithm \citep{Peng1996IncrementalMQ} is the analog of TD($\lambda$) for control. With $\w$ now representing the parameters of the action-value function $\Q_\w$, the corresponding action-value update replaces $\Delta^\V_t$ in Eq. \eqref{eq:TD_lambda_update} with
 \begin{align}
\Delta^\Q_t &\equiv \left(G^{\lambda}_{t} -  \Q_{\w_t}(S_t, A_t)\right)\nabla \Q_\w(S_t, A_t)\,,\text{ with}  \nonumber
\\
G^{\lambda}_t &= R_{t+1} + \gamma_{t+1} (1 - \lambda_{t+1})\max_a \Q_{\w}(S_{t+1}, a) \nonumber
\\
& \qquad + \gamma_{t+1} \lambda_{t+1} G^{\lambda}_{t+1}\nonumber \,.
\end{align}
Standard Q-learning corresponds to $\lambda_t=0, \forall t$.
An alternative derivation of the Q($\lambda$) algorithm (see \cref{A:New derivation of Off-policy}), yields:
\begin{align}
\Delta^\Q_t &\!=\! \e_{t-1} R^{\lambda}_t \!- \!\Q_{\w_t}(S_t, A_t) \nabla \Q_{\w_t}(S_t, A_t) \!\,,\! \text{ with}\! \label{eq:delta_q}
\\
R^{\lambda}_t &\!=\! R_{t+1} + \gamma_{t+1} (1-\lambda_{t+1}) \max_{a} \Q_{\w_t}(S_{t+1}, a)\,,\text{ and} \nonumber
\\
\e_t &\!=\! \gamma_t\lambda_t \e_{t-1} +\nabla \Q_{\w_t}(S_t, A_t)
\nonumber\,,
\end{align}
where $R^{\lambda}$ is a \emph{fixed-horizon one-step target}, composed of the reward at the next timestep and the bootstrapped value using a modified discount $\gamma_{t+1}(1-\lambda_{t+1})$, $\e_t$ is an accumulating eligibility trace, and $\alpha^{\w}_t$ is a step-size parameter.
}

\paragraph{QET($\lambda$)}{The QET($\lambda$) algorithm \citep{hasselt2020} is analogous to Q($\lambda$), but using learned models of expected traces in place of the standard instantaneous traces.
}

\section{Selective credit assignment}
All the aforementioned algorithms apply some sort of selection mechanism over which samples are used for learning. This choice can be implicit---e.g., in TD($\lambda$) it is determined by the behaviour distribution, or explicit---e.g., in ETD($\lambda$).

We take a unifying view over all the previous algorithms by separating the learning algorithm from this selection mechanism, now captured explicitly in a function $\omega : \S \to [0, \infty)$.
We will mostly consider selectivity as a function of state; other extensions are possible, such as allowing it to be a function of history $\omega : \calH \to [0, \infty)$, with $\calH$ the space of histories, or a function over some feature representation.

\paragraph{TD($\lambda, \omega$)}{We call \emph{selective TD($\lambda, \omega$)} the generic algorithm that uses a weighting function for the value update, replacing the standard trace with a \emph{weighted} or \emph{selective eligibility trace} (henceforth using the $\ \tilde{ }\ $ superscript to denote explicit selectivity):
\eqq{
\tilde{\e}_{t} \!=\! 
\gamma_t \lambda_t \tilde{\e}_{t-1} + \omega(S_t) \nabla \V_{\w_t}(S_{t}) \label{eq:weighted_eligibility_trace}\,.
}
Generally, the weighting function controls how much credit is received by a state. For instance, if $\omega(s) = 0$, contributions in the trace from $s$ are dropped. Hence, predictions at $s$ are then not corrected, and are learned solely via generalization, if at all. The backward view \eqref{eq:weighted_eligibility_trace} can be equivalently expressed in a forward view as
\begin{align}
\tilde{\Delta}_t^{\V} &= \omega_t (G^{\lambda}_t - \V_{\w_{t}}(S_t))\nabla \V_{\w_t}\,, \text{with } \label{eq:weighted_TD_forward_update}
\\
G^{\lambda}_t &= R_{t+1} + \gamma_{t+1}(1-\lambda_{t+1}) \V_{\w_{t}}(S_{t+1}) + \gamma_{t+1}\lambda_{t+1} G^{\lambda}_{t+1}\nonumber\,.
\end{align}
Fig.~\ref{fig:toy_aliasing} (Left) illustrates how state weightings can impact value learning. 
}

\begin{figure}[t!]
\begin{center}
        \raisebox{-0.5\height}{\includegraphics[scale=.29]{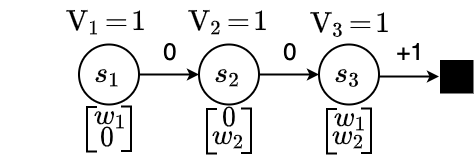}}
        \hspace*{.2in}
    \raisebox{-0.5\height}{\includegraphics[scale=.29]{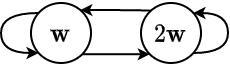}}
    \end{center}
\caption{\label{fig:toy_aliasing}
\small \textbf{(Left)} The value function $\V_{\w}(s) = \w^\top \x(s)$ has two parameters $\w = [w_1, w_2]^\top$. The features are $\x(s_1) = [1, 0]^\top$, $\x(s_2) = [0, 1]^\top$, $\x(s_3) = [1, 1]^\top$.  The true values are $\V(s_1) = \V(s_2) = \V(s_3) = 1$, but cannot be represented accurately in all states at the same time. Uniformly weighting the state space can result in trade-offs, such as $w_1 = w_2 = 2/3$, s.t. $\V(s_1) = \V(s_2) = 2/3$ and $\V(s_3) = 4/3$.  When desired, weighting selectively can add flexibility, for instance by ignoring $s_1$ and learning the correct value function only in states $s_2$ and $s_3$ (by setting $w_1=0$, $w_2=1$). We could also reweigh states without ignoring them completely, by having low but non-zero weight on them.
\textbf{(Right)} Two-state MDP example from \citet{sutton2016emphatic}. Consider learning on-policy with a uniform policy and transition dynamics. There is only one feature and one parameter $w$.}
\end{figure}

It is well known that in the \emph{off-policy} setting, discrepancies between the behaviour policy used to sample experience and the target policy whose value is being learned can destabilize learning, even with posterior corrections in the form of importance sampling weights \cite{Tsitsiklis_VanRoy:97}. Moreover, even when off-policy TD does converge, its solution may be arbitrarily far from the optimal representable value \cite{Kolter2011TheFP}.

\subsection{On-policy TD can diverge}{
Semi-gradient TD algorithms are known to converge in the \emph{on-policy} setting, as long as experience is sampled uniformly \citep{Tsitsiklis_VanRoy:97, sutton2018}.
However, divergence can still happen \emph{on-policy} when not carefully considering how experience is weighted. 
Naively using an arbitrary weighting $\omega$ in $\TD(\lambda, \omega)$ can lead to divergence.  Consider using TD($\lambda, \omega$) on the problem in Fig.~\ref{fig:toy_aliasing} (Right), in which we weight experience non-uniformly by putting a non-zero weight only on the first state ($\omega(s) = 1$) and using a constant trace-decay $\lambda(s) = 0, \forall s$. If $\gamma>0.5$ and, initially, $w_0 = 1$, then $w$ increases without bounds because we repeatedly update for the transition $w\to 2w$, while ignoring the transition $2w \to w$, causing divergence \citep{sutton2018, hasselt2018}.

Similarly, on-policy TD($\lambda, \omega(\cdot)=1$) with uniform state weighting and \emph{non-uniform bootstrapping} (e.g., $\lambda(s)$ varies across states) can diverge \citep{white2017unifying}. In Fig.~\ref{fig:toy_aliasing} (Right), this occurs if the weighting is non-zero in both states but we bootstrap only on the second state. In the next section we describe several algorithms that can address this issue.
}

\subsection{Selectivity through emphasis for stability} \label{Selectivity through emphasis for stability}
The aforementioned issues can arise when there is an imbalance between how often a state is used to compute update targets, and how often it is updated itself. Two kinds of weightings have been proposed to correct for this imbalance---\emph{emphasis} and \emph{distribution ratios}. Both can be seen as instances of TD($\lambda, \omega$) for different weighting functions $\omega$. 
Because distribution ratios are generally difficult to estimate, we focus on emphatic weightings. We previously described the emphatic algorithm ETD($\lambda$), which weights experience using a history-dependent weighting $
\omega_t =\rho_t M_t
$.

A different instance of TD($\lambda, \omega$) can be obtained by learning the expectation of the follow-on weighting $f(s) = \E_\mu[F_t|S_t=s]$, similar to \citet{zhang2020learning} and \citet{ray2021}:\!
\begin{align}
\omega_t &\!=\! \rho_t m_t\! \,, \text{with}\label{eq:offpolicy_weighting_expected} \!\\
m_t &\!=\! \lambda_t i_t + \gamma_t(1\!-\!\lambda_t)
f(s), \text{and }
f(s) \!=\! \E[F_{t}|S_t=s]\! \,,\!\nonumber
\end{align}
and using Eq.~\eqref{eq:offpolicy_weighting_expected} in Eq.~\eqref{eq:weighted_eligibility_trace}.
The expected follow-on can be estimated with a function $f_\vphi \approx f$, with learnable parameters $\vphi$. Cannonical learning methods for learning value functions can be applied, by reversing the direction of time in the learning update, similarly to expected eligibility traces\footnote{Estimating a single scalar, instead of a $d$-dimensional vector.}, e.g., Monte-Carlo regression on the instantaneous follow-on trace,
or backward TD (see \cref{A:Weightings for distribution correction: off-policy expected emphasis}). Because of its similarity to the $n$-step algorithm X-ETD($n$) proposed by \citet{ray2021} we will refer to this algorithm as \textbf{X-ETD($\lambda$)}---with X($0$)-ETD($\lambda$) denoting the variant where the expected follow-on is learned with backward TD and X($1$)-ETD($\lambda$) the variant where it is learned with regression to the full Monte Carlo follow-on trace.

\begin{figure}[t!]
\begin{center} \includegraphics[scale=.3]{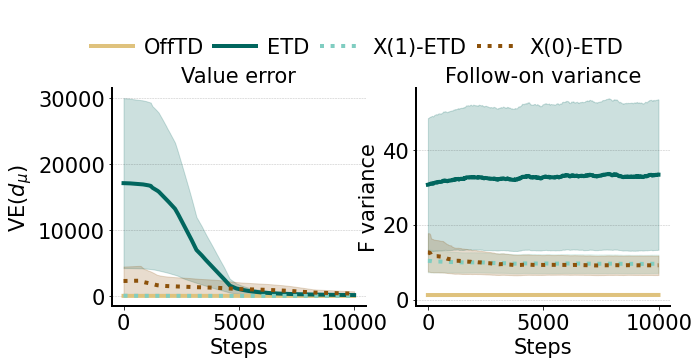}
    \end{center}
\caption{\label{fig:five_states_experiment}
\small \textbf{Five-state MRP:} \textbf{(Left)}: Value error under the stationary distribution of the behaviour policy for {\color{yellow}off-policy TD($0$)}, {\color{dark turquoise}ETD($0$)}, {\color{turquoise}X($1$)-ETD($0$)}---which learns emphasis by regressing on the instantaneous trace, and {\color{brown}X($0$)-ETD($0$)}---which learns emphasis by backward TD. Off-policy TD and X(1)-ETD visually overlap on the left, as do X(0)-ETD and X(1)-ETD on the right. The higher error of ETD is due to higher variance, caused by the follow-on trace. \textbf{(Right)}: Variance of the follow-on (taken to be 0 for off-policy TD, which does not have this trace).  Details on the MRP are in Fig.~\ref{fig:5state_mdp_app}, \cref{A:Weightings for off-policy distribution correction} and in \citet{sutton2016emphatic}, but do not affect this conclusion: follow-on traces typically add variance to the updates.} 
\end{figure}

\paragraph{Empirical illustration on emphatic algorithms.}
We illustrate these policy-evaluation algorithms on the $5$-state MRP introduced by \citet{sutton2016emphatic} (also depicted in Fig.~\ref{fig:5state_mdp_app} in \cref{A:Weightings for off-policy distribution correction}). In Fig.~\ref{fig:five_states_experiment}, we observe that the emphatic algorithms using expected emphasis have lower variance (see \cref{A:Weightings for off-policy distribution correction} for more details on the experimental setup).

\subsection{Selectivity for on-policy TD}\label{subsection:Selectivity for on-policy TD}{We now describe selectivity functions which make on-policy learning stable. For constant interest $i$ and constant discount factor $\gamma$, we can find a closed-form weighting that exactly equals the expected emphasis, without needing to learn it, and thus corrects for a dynamic trace-decay $\lambda_t$, thereby avoiding divergence. This weighting is coupled with the trace-decay $\lambda$ and the discount factor $\gamma$ through:
\begin{align}
\omega_t \!=\! (1 \!- \!\gamma\lambda_t)/(1\!-\!\gamma)
\!\iff\! \lambda_t \!=\! (\gamma \omega_t \!+\! (1\!-\! \omega_t))\!/\!\gamma\,.\!
\label{eq:coupling_1}
\end{align}
The constant denominator determined by $\gamma$ can be folded into the learning rate.

An interesting consequence of this new insight is that we could pick the gradient weighting in the accumulating trace \eqref{eq:weighted_eligibility_trace} to guarantee convergence, or, if such a weighting is given, we can use this to pick the trace-decay parameter $\lambda_t$ and the temporal discounting $\gamma$ to ensure convergence. 

If the discount $\gamma_t$ is dynamic, the expected emphasis cannot be recovered in closed-form. We can instead use a slightly different coupling:
\begin{align}
\!\omega_t \!=\! (1 \!-\! \gamma_t \lambda_t)\!/\!(1 \!-\! \beta_\lambda) \!\iff\! \lambda_t  \!=\! (1\!-\!\omega_t\!+\!\beta_\lambda\omega_t)\!/\!\gamma_t\!
\label{eq:coupling_2}
\end{align}
which is stable and convergent $\forall \beta_\lambda \in [0,  1)$ under mild conditions. Here, $\beta_\lambda$ controlls the decay rate of the on-policy follow-on trace, analogous to the one introduced by \citet{hallak2016} (allowing smooth interpolation between TD and ETD). 
\cref{A:Weightings for on-policy learning: on-policy expected emphasis} contains proofs and derivations.

\paragraph{Q($\lambda, \omega$)}{
Analogous to TD($\lambda, \omega$), we use Q($\lambda, \omega$) to refer to the generic algorithm that adds an explicit weighting function $\omega$ to the trace of the action-value function: 
\begin{align}
    \tilde{\e}_t &= \gamma_t \lambda_t\tilde{\e}_{t-1} +\omega_t\nabla\Q_{\w_t}(S_t, A_t)\,,\label{eq:selective_q_trace}
\end{align}
to be used in Eq.~\eqref{eq:delta_q} in place of $\e_t$.
}

\paragraph{Empirical illustration in deep reinforcement learning}{
To illustrate the importance of this novel connection between weightings and bootstrapping, we used Ms.Pac-Man, a canonical Atari game. We designed the following experimental setup to test the selective Q($\lambda_t, \omega_t$) (with the \emph{``$t$''} subscript denoting state or time-dependence), using the trace-decay correction rules in Eq.~\eqref{eq:coupling_1} and Eq.~\eqref{eq:coupling_2}.
With probability $\epsilon$, the agent's observation is replaced with random Gaussian noise, to mimic a noisy observation sensor. To simulate access to a module that detects such noisy observations, we provide access to a time-dependent interest $i_t$, capturing whether an observation is noisy or not, s.t. $i_t = 0$ if the observation at time step $t$ is noisy, and $i_t=1$ otherwise. Selectivity is entirely dictated by interest, with no other corrections: $\omega_t \equiv i_t$. For the state-dependent trace-decay function $\lambda_t$, we use Eq.~\eqref{eq:coupling_2}. In Fig.~\ref{fig:ms_pacman}--Top-Left \& Top-Right we observed that coupling the weighting and the trace-decay function in Eq.~\eqref{eq:coupling_2} recovers the baseline's performance, displaying robustness to observation noise. The caption and \cref{A:Weightings for on-policy distribution correction} contain further details.
}

\begin{figure}[t!]
\begin{center}
        \includegraphics[scale=.32]{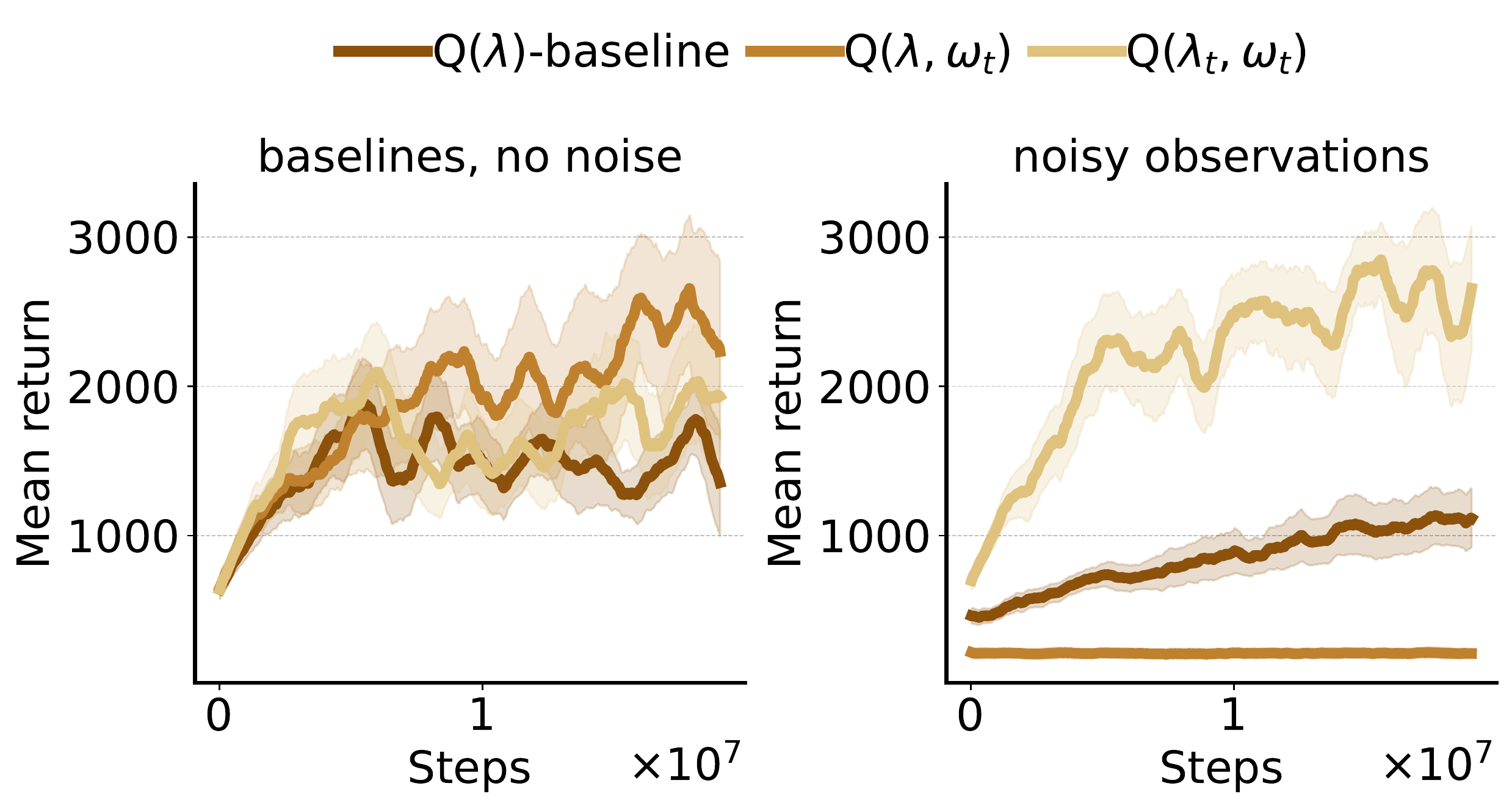}
        \includegraphics[scale=.32]{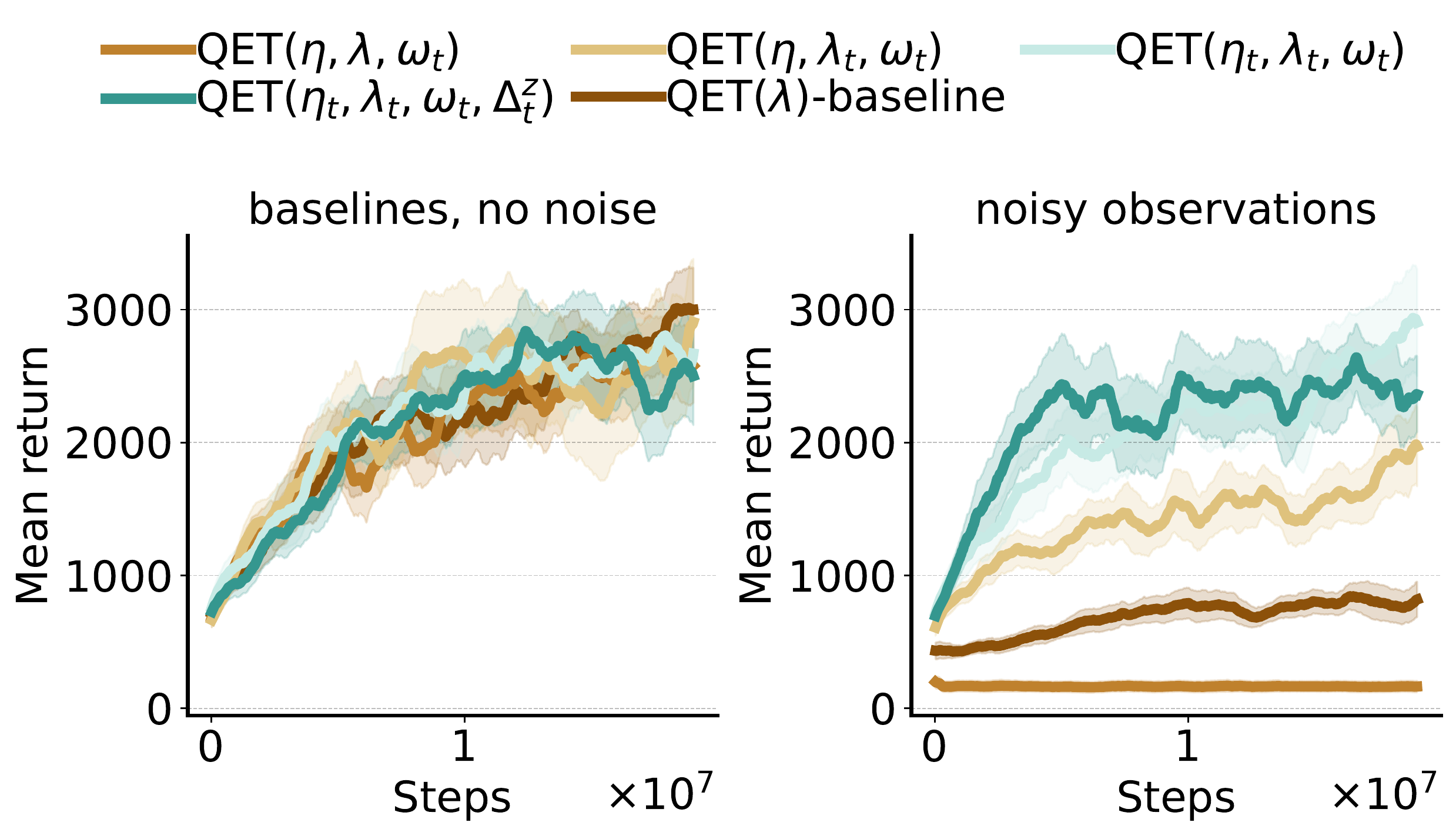}
    \end{center}
\caption{\label{fig:ms_pacman}
\small \textbf{Atari Ms.Pac-Man:}
Mean returns.
 \textbf{(Left)}: with standard observations, $\epsilon=0$. \textbf{(Right)}: noise is added with probability $\epsilon=0.5$. 
\textbf{(Top)} Algorithms using instantaneous traces: (i) {\color{brown}baseline Q($\lambda=0.9$)} using uniform weightings; (ii) {\color{light brown}Q($\lambda=0.9, \omega_t$)}, using state-dependent weightings: $\omega_t \equiv i_t$ (where $i_t = 0$ for noisy observations, and $1$ otherwise); (iii) {\color{yellow} Q($\lambda_t, \omega_t$)}, additionally using state-dependent trace-decays $\lambda_t$ with Eq.~\eqref{eq:coupling_2}, and $\beta_\lambda = 0.9$.
\textbf{(Bottom)} Algorithms using expected traces: (i) the {\color{brown} baseline QET($\eta=0, \lambda=0.9$)} using uniform weightings; (ii) {\color{light brown} QET($\eta=0, \lambda=0.9, \omega_t$)}, using $\omega_t = i_t$; (iii) {\color{yellow} QET($\eta=0, \lambda_t, \omega_t$)}, additionally using a state-dependent $\lambda_t$ with Eq.~\eqref{eq:coupling_2}, and $\beta_\lambda =0.9$; (iv) {\color{turquoise} QET($\eta_t, \lambda_t, \omega_t$)}, in addition applying a state-dependent trace-bootstrapping function $\eta_t$ using Eq.~\eqref{eq:coupling_3}, with $\beta_\eta=0$; (v) {\color{dark turquoise} QET($\eta_t, \lambda_t, \omega_t, \tilde{\Delta}^{z}_t$)}, also accounting for the weighting in the trace learning process, using Eq.~\eqref{eq:coupling_4}.
Shaded areas show standard error over $5$ seeds. These results demonstrate the importance of (1) weighting appropriately, and (2) coupling the weighting and $\lambda$ and $\eta$, as well as (3) scalability of the ideas to non-linear function approximation such as deep neural networks.
}
\end{figure}

\section{Planning selectively}
We now provide insights on explicitly adding selectivity to planning algorithms.
Expected eligibility traces attempt to capture all possible trajectories coalescing into a state. This allows credit to flow more broadly, not just to states that have happened, but also to states that could have happened, under the same policy. These methods can be interpreted as planning backwards, while their standard backward view counterparts---TD($\lambda$) and Q($\lambda$)---do not plan and only use the current sampled trajectory.

We consider adding explicit selectivity to these planning algorithms, and describe how different choices yield interesting new interpretations for these models of expected traces and their associated algorithms.

\subsection{Learning off-policy counterfactuals online}
\paragraph{ET($\lambda, \omega$)}{
For evaluation, we call ET($\lambda, \omega$), the planning algorithm that estimates and uses a model of the \emph{selective expected eligibility traces} in place of the instantaneous traces:
\eq{
\tilde{\z}_{\T}(s) &\approx \E_{\pi}\left[\tilde{\e}_{t} \mid S_{t}=s\right] \,,
}
with $\T$---the model's parameters, and $\tilde{\e}$ from Eq.~\eqref{eq:weighted_eligibility_trace}.
}

Expected traces have been described for the on-policy setting.
We can instead learn the expected trace for a target policy $\pi$ under a different behaviour $\mu$ by adding importance sampling ratios to the selective trace:
\eq{
\tilde{\e}_t &= \rho_t \gamma_t\lambda_t \tilde{\e}_{t-1} + \omega_t \nabla \V_{\w_t}(S_t)\,,
}
and choosing a selectivity function that adds \emph{emphatic corrections}, with $\omega_t$ as defined in equation~\eqref{eq:offpolicy_weighting_mixture_MC} or \eqref{eq:offpolicy_weighting_expected}.

\begin{figure}[t!]
\begin{center}
        \includegraphics[width=0.48\linewidth]{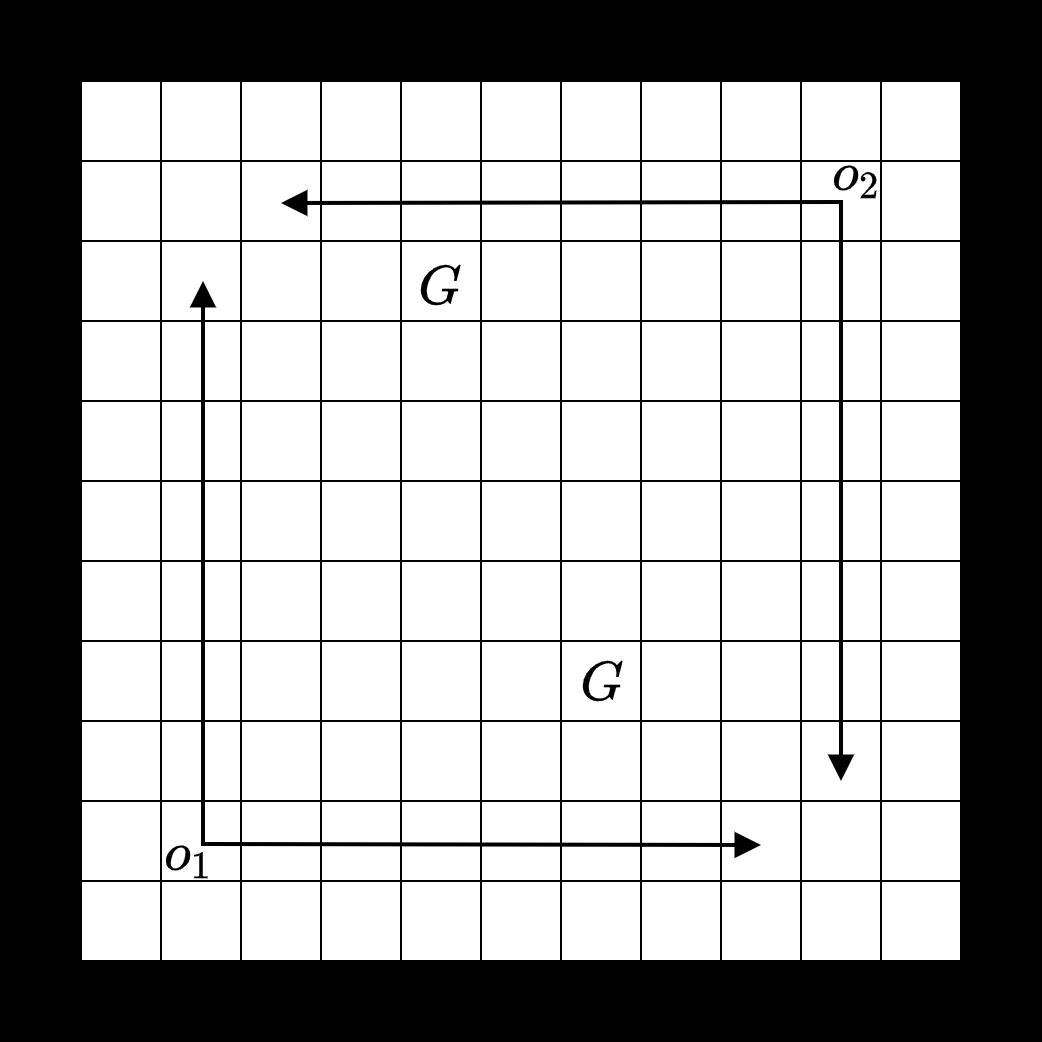}
        \includegraphics[width=0.48\linewidth]{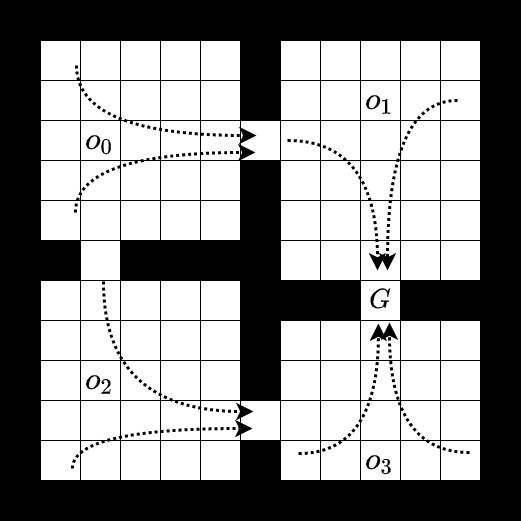}
    \end{center}
\caption{\label{fig:envs}
\small \textbf{(Left)} Open World. \textbf{(Right)} Four rooms domain.}
\end{figure}

\paragraph{QET($\lambda, \omega$)}{
We call selective QET($\lambda, \omega$) the generic algorithm using an expectation model of the selective action-value function trace from Eq.\eqref{eq:selective_q_trace}:
\begin{align}
    \tilde{\z}_{\T}(s) \approx \E_{\pi}[ \gamma_t \lambda_t \tilde{\e}_{t-1} \mid S_t = s]\label{eq:selective_qet}\,.
\end{align}
An important difference to prior algorithms is that we estimate the decayed \emph{previous} expected trace $\gamma_t \lambda_t \tilde{\e}_{t-1}$ in Eq.~\eqref{eq:selective_qet}, instead of $\tilde{\e}_t$. We then use $\tilde{\z}_{\T}(s) + \nabla \Q_{\w}(s, a)$ as the trace for action $a$ to be used in Eq.~\eqref{eq:delta_q} in place of $\tilde{\e}_t$. This avoids having to condition $\tilde{\z}_{\T}(s)$ on the action $a$, significantly reducing computation in settings with many actions---for instance in Atari this saves a factor $|\mathcal{A}|=18$, and in many domains $|\mathcal{A}|$ is higher. In addition we get more data per state than for each, more specific, state-action pair, thereby potentially facilitating learning the traces accurately
}

\paragraph{Empirical illustration of  learning counterfactuals (models of expected traces) online off-policy}{
Consider the Open World environment illustrated in Fig.~\ref{fig:envs}-Left. The precise setup is described in \cref{A:Off-policy counterfactual evaluation}. In short, the agent's behaviour $\mu$ is uniformly random and we consider learning about two stochastic policies: one that tends up and right and another that tends down and left.  We consider learning two expectation models for the traces associated with those two policies, and then use those expected traces to learn to predict their values. Rewards are, noisily, obtained when bumping into one of the goals (denoted $G$).  Bumping into a goal ends the episode; new episodes start at a random location. In Fig.~\ref{fig:room}, we illustrate the effect of  increasing the sparsity of the reward signal, with $\epsilon_r$ indicating the probability that the agent receives a reward of $r = 10/\epsilon_r$ in each goal. The reward is zero otherwise. Fig.~\ref{fig:room} shows off-policy expected traces effectively reduce variance.
}

\begin{figure}[t!]
\begin{center}
        \includegraphics[scale=.24]{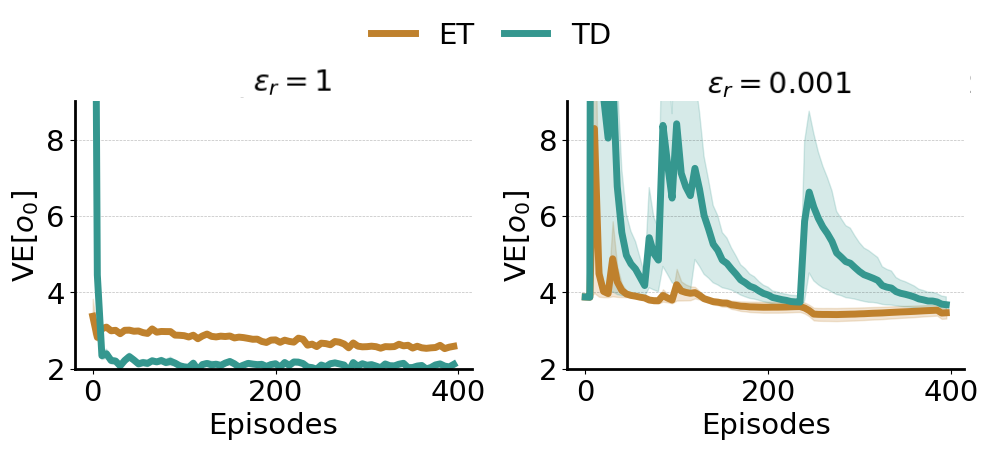}
    \end{center}
\caption{\textbf{Open World}: Value error for policy evaluation using {\color{dark turquoise}TD($\lambda=0.98$)} and {\color{brown}ET($\lambda=0.98$)}, for the policy tending to the top right (similar performance for the other policy in Fig.\ref{fig:room_o2}, in \cref{A:Off-policy counterfactual evaluation}).
With deterministic rewards ($\epsilon_r = 1$, left) TD($\lambda, \omega$) learned faster than ET($\lambda, \omega$). With stochastic rewards ($\epsilon_r = 0.001$, right), the off-policy expected traces reduce variance and value error. Details on hyper-parameters and tuning are in \cref{A:Off-policy counterfactual evaluation}.
Shaded areas show standard errors over $20$ seeds.
\label{fig:room}
\small }
\end{figure}

\subsection{Selectively using and learning models of traces}\label{section:selective_expected_traces}
To learn expected traces, \citet{hasselt2020} propose a mechanism similar to value bootstrapping by introducing \emph{mixture traces} $\e^\eta$ (analogous to $\lambda$-returns, with $\eta$ the counterpart of $\lambda$):
\begin{align}
\e^{\eta}_t &=\! (1 \!-\! \eta) \z_{\T_t}(S_t) + \eta (\gamma_t \lambda_t \e^{\eta}_{t-1} + \nabla \V_{\w_t}(S_t) )\,.\! \label{eq:mixture_traces_1}
\end{align}
The trace-bootstrapping parameter $\eta$ allows us to smoothly interpolate between using expected or instantaneous traces. Using $\e^{\eta=0}_t\!=\! \z_{\T_t}(S_t)$ results in counterfactual credit assignment based on expected traces, while using $\e^{\eta=1}_t \!=\!  \gamma_t \lambda_t \e^{\eta=1}_{t-1} + \nabla \V_{\w_t}(S_t)$ results in \emph{trajectory-based} learning, relying fully on instantaneous traces. 
The generic expected trace algorithm ET($\lambda$, $\eta$) is then defined by
$\Delta^\V_{t} = \delta_t \e^{\eta}_t$,
and smoothly interpolates between these extremes for $\eta \in [0, 1]$.


We now consider how selectivity could influence using expected traces for value learning. It is reasonable to rely on expected traces more in states where they are more accurate, and more on the instantaneous traces otherwise. Assuming an explicit selectivity mechanism, we can constrain the trace-bootstrapping parameter $\eta$:
\begin{align}
\eta_t = \beta_\eta \tilde{\omega}_t + (1 - \tilde{\omega}_t)\,, \label{eq:coupling_3}
\end{align}
where $\beta_\eta \in [0,1)$ allows for partial trace-bootstrapping, and we used $\tilde{\omega}$ to distinguish this from the $\omega$ used in the value learning process, which could be different. For instance, consider the special case in which selectivity captures partial observability. The value learning could then rely on estimated expected traces more for states that are less aliased, where $\tilde{\omega}_t > 0$, and on the instantaneous eligibility trace otherwise, where $\tilde{\omega}_t = 0$. 

So far, we discussed using selectivity for learning the value function, either through selective value updates or by adapting the mixing parameter $\eta$ of the mixture trace. But the learning of the expected trace is itself subject to a sampling procedure, in which we can inject selectivity. We can apply the same procedure we did for value learning in Eq.~\eqref{eq:weighted_TD_forward_update}:
\begin{align}
\tilde{\Delta}^{\z}_t &\equiv \tilde{\omega}_t \frac{\partial \tilde{\z}_{\T_t}(S_{t})}{\partial \T_t}\left(\tilde{\e}_t-{\z}_{\T_t}(S_{t})\right) \,,\label{eq:coupling_4}
\end{align}
and then using $\tilde{\Delta}^{\z}$ in place of $\Delta^{\z}$ in Eq.~\eqref{eq:learning_traces}. 

When the value learning uses mixture traces, the trace learning process can generally save function approximation resources by also focusing learning the expected trace only in those states in which the expected trace is used. If a mixture trace $\tilde{\e}^\eta$ (similar to Eq.~\eqref{eq:mixture_traces_1}) is used in Eq.~\eqref{eq:coupling_4} in place of $\tilde{\e}$, we now have a multi-step trace learning process similar to the value learning process in TD($\lambda$), so coupling the dynamic trace-bootstrapping $\eta$ and the weighting $\tilde{\omega}$ analogously ensures stable learning of the expected traces.

\paragraph{Empirical illustration in deep reinforcement learning.}{ 
We again consider Ms.Pac-Man to illustrate the effectiveness of using \emph{selective expected eligibility traces}, and the importance of coupling the trace-bootstrapping function $\eta$ with the weightings $\tilde{\omega}$, and focusing function approximation resources when learning the model for the trace. We use the same experimental setup as before. Fig.~\ref{fig:ms_pacman}--Bottom-Left illustrates the baseline runs of the algorithms without observational noise, whereas  Fig.~\ref{fig:ms_pacman}--Bottom-Right shows the effect of adding noise to the observations for all the algorithms. We found the algorithms using Eq.\eqref{eq:coupling_3} and Eq.\eqref{eq:coupling_4} to recover the original performance of the baselines, despite needing to rely on substantially noisier observations.
}

\subsection{Sparse expected eligibility traces}
Interestingly, learning expectation models of selective traces (c.f. \eqref{eq:weighted_eligibility_trace}) with \emph{binary} weighting functions $\omega : \S \to \{0, 1\}$, results in \emph{sparse expected eligibility traces}.
These models are equivalent to expected temporally-extended backward models \citep{chelu2020forethought} (proof in \cref{A:Sparse expected eligibility traces}).

In general, backward planning \citep{ Peng1993EfficientLA, Moore2004PrioritizedSR, McMahan2005FastEP, Sutton2008DynaStylePW, Hasselt2019WhenTU, chelu2020forethought} propagates credit to events \emph{possibly} responsible for the current outcome, typically using \emph{explicit  backward transition models}. Option models \cite{Sutton1999BetweenMA} describe the long-term effects of temporally-abstract actions (options).

Sparse expected eligibility traces, used in planning algorithms, e.g., selective ET or QET, assign credit in ways akin to jumpy backward planning \citep[cf.][]{chelu2020forethought}, but without learning an explicit dynamics model. Specifically, for \emph{binary} weighting functions, the selective planning algorithm QET($\lambda, \omega$) is similar to planning with \emph{backward option models} \citep{chelu2020forethought}, skipping parts of the state space as dictated by the weighting function. 

Credit assignment using sparse expected traces then happens in a sub-MDP within the original one, where the state space now contains only the states captured by the binary weighting function, and the action space is given by an induced option space (containing all the actions/sub-policies in-between these states). In terms of learning, this leads to a way of doing temporally-extended credit assignment in the original MDP. One can even go a step further, and apply selective temporal-discounting, similarly to \cite{pmlr-v97-harutyunyan19a}, e.g., using different discounting parameters for the intra-option expected eligibility trace that flows credit inside an option, and the sparse expected trace that flows credit over longer time-spans using options.

\begin{figure}[t!]
\begin{center}
        \includegraphics[width=0.9\linewidth]{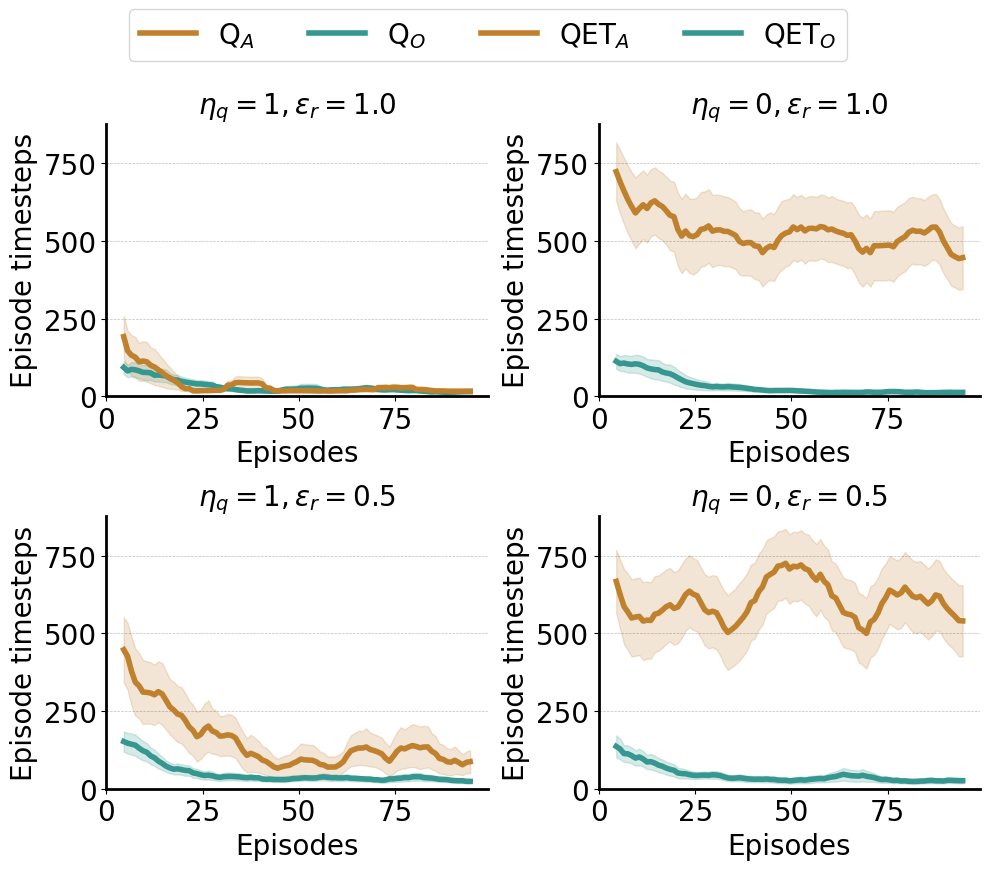}
    \end{center}
\caption{\label{fig:4rooms_illustration}
\small Number of steps per episode. \textbf{(Left)} {\color{light brown}Q$_A$}: {\color{light brown}Q($\lambda=0.9$)} with dense (standard) eligibility traces over primitive actions; {\color{dark turquoise}Q$_O$}:  {\color{dark turquoise}Q($\lambda_t, \omega_t$)} using a sparse eligibility trace for a policy over options (with pre-learned options as illustrated in Fig.~\ref{fig:envs}-Right, $\lambda_t$ using Eq.\eqref{eq:coupling_2}, $\beta_\lambda=0.9$ and $\omega_t = 1$ in hallways, and $0$ otherwise). \textbf{(Right)} {\color{light brown}QET$_A$}: {\color{light brown}QET($\eta=0, \lambda=0.9$)} using dense expected eligibility traces over primitive actions; {\color{dark turquoise}QET$_O$}: {\color{dark turquoise}QET($\eta_t, \lambda_t, \omega_t$)} using sparse expected eligibility traces for a policy over pre-learned options (with $\eta_t$ using Eq.\eqref{eq:coupling_3}, $\beta_\eta=0$, $\lambda_t$ using Eq.\eqref{eq:coupling_2}, $\beta_\lambda=0.9$, and $\omega_t=1$ only in hallways, and $0$ otherwise). \textbf{(Top)}: $r=10$ at the goal. \textbf{(Bottom)}: sparse reward signals, $r=20$, with probability $\epsilon_r=0.5$, and $r=0$ otherwise. Sparse expected traces were learned faster and coped better with sparse rewards.}
\end{figure}

\paragraph{Empirical evaluation of sparse expected traces}{
In the Four Rooms domain \cite{Sutton1999BetweenMA} (Fig.~\ref{fig:envs}-Right),
the agent aims to navigate to a goal location via options that take it from inside each room to its hallways, as shown in the illustration. The reward is $0$ everywhere, except at the goal, where it is $10 / \epsilon_r$ with probability $\epsilon_r$, and $0$ otherwise. The option policies are pre-learned, and illustrated in Fig.~\ref{fig:envs}-Right. Fig.~\ref{fig:4rooms_illustration} shows the results of using sparse traces for a policy over options, compared with using only primitive actions (see \cref{A:Empirical Sparse expected eligibility traces} for more details).
}

\section{Discussion}\label{section:discussion}
We discussed the use of weighting functions to stabilize, and focus resources in value-based credit assignment, introducing \emph{selective traces}. We illustrated the importance of trace corrections even for the on-policy case, as well as the significance of adding corrections for off-policy learning.
In the context of expected traces, weightings can act as a guide for when to rely on the learned expected traces. It can also be applied to selectively learn the traces themselves. 

We identified potential issues with naively combining weightings with instantaneous and expected traces. Using these as motivation, we proposed and investigated different modifications that allow for safe, selective credit assignment, identifying sufficient conditions linking the credit assignment parameters to ensure stable learning.

Our computational examples illustrate the potential benefits of adding weightings to the credit assignment problem, and show that selective learning can substantially improve performance in some settings.

Although specific weightings have been discussed for off-policy learning before, our definition is generic and does not restrict to a specific weighting, or to off-policy evaluation. Specifically, we consider weightings to focus function approximation resources, and for sparse or jumpy backward planning in the control setting. Emphatic weightings are one instance of selective traces. Other, perhaps more effective, choices are possible. Finding the ``best'' selection function for specific problems remains an intriguing open problem for future work, and could be domain-specific.

\citet{anand2021preferential} proposed a trace correction similar to the closed-form expected emphasis correction we derived, without recognizing that this is related to expected emphasis, and employing it differently. Particularly, they replace the trace-decay $\lambda$ with a preference function $\beta$, and allow zero-step returns rather than the typical $n$-step return, $n \geq 1$. Our formulation is more generic, subsuming TD, ETD, and X-ETD (expected-emphasis ETD) (\cref{Selectivity through emphasis for stability}), and recognizing the coupling between the weighting $\omega$ and the trace-decay $\lambda$ is strongly related to expected emphasis.
}

\textbf{Future work}~~Our examples consider the weighting given, or use emphatic weightings to stabilize learning. One direction for future work is adapting the weighting over time, based on experience. One potential approach is (meta) learning $\omega$, $\lambda$, and/or $\gamma$ \cite{Sutton1994OnSA, White2016AGA, Xu2018MetaGradientRL,zahavy2020self}, based on variance or bias minimization, or other proxy objectives \cite{Kumar2020DisCorCF}. Our results can be used to derive stable convergent updates, by respecting the coupling of these parameters.

A second potential approach is to use ideas for learning option terminations \citep{Bacon2017TheOA, Harutyunyan2019TheTC}, which are akin to the credit assignment functions we considered. Inferring controllability \cite{Harutyunyan2019HindsightCA} could inform weightings over states and actions, e.g., some states are irrelevant if all actions have the same consequences, and some actions could be irrelevant if they are unlikely to be selected. Weightings can be learned based on these intuitions. Hindsight conditioning \cite{Harutyunyan2019HindsightCA} can also help infer
policy-related weightings.

\section*{Acknowledgements and disclosure of funding}
Veronica Chelu was partially supported by a Borelis AI fellowship, and the paper partially executed while an intern at DeepMind.
\bibliographystyle{abbrvnat}
\bibliography{main}

\newpage
\appendix
\onecolumn
\title{Supplementary Material}
\section{Background and preliminaries (details)}
\subsection{Expected eligibility traces}\label{A:Expected eligibility traces}
\emph{Expected eligibility trace} (ET) algorithms \citep{hasselt2020} have been introduced for \emph{off-trajectory}, \emph{on-policy} value learning, replacing the instantaneous trace ${\e}_t$ with an estimated expectation:
\begin{align}
\z_{\T}(s) & \approx \E_{\pi}\left[{\e}_{t} \mid S_{t}=s\right]\,,\text{ with}\nonumber\\
\e_t &= \gamma_t \lambda_t \e^{\eta}_{t-1} + \nabla \V_{\w_t}(S_t)\,.\nonumber
\end{align}
We can approximate the expected traces by regressing on the instantaneous eligibility trace $\e_t$, or by a mechanism similar to value bootstrapping using the \emph{mixture trace} $\e^\eta_t$:
\begin{align}
\e^{\eta}_t & = (1 - \eta) \z_{\T_t}(S_t) + \eta [\gamma_t \lambda_t \e^{\eta}_{t-1} + \nabla \V_{\w_t}(S_t) ]\nonumber\,.
\end{align}
The trace-bootstrapping parameter $\eta$ specifies the credit assignment mechanism with $\e^{\eta=0}_t= \z_{\T_t}(S_t)$ resulting in \emph{counterfactual} credit assignment (fully relying on the expected traces), while $\e^{\eta=1}_t =  \gamma_t \lambda_t \e^{\eta=1}_{t-1} + \nabla \V_{\w_t}(S_t)$ uses \emph{factual} or \emph{trajectory-based} learning (reverting fully to instantaneous traces). 

The expected trace algorithm ET($\lambda$, $\eta$) is then defined by:
\begin{align}
\w_{t+1} &= \w_{t} + \alpha^{\w_t}_{t}\Delta^{\V}_t\,, \text{ with}
\nonumber\\
\Delta^{\V}_t &= \delta_t \e^{\eta}_t \,,\nonumber
\end{align}
and smoothly interpolates between these two end points (using expected or instantaneous traces).

The expected trace $\z_\T(S_t)$ can be learned by regressing on $\gamma_t \lambda_t \e^{\eta}_{t-1} + \nabla\V_{\w_t}(S_t)$, s.t.:
\begin{align}
    \Delta^{\z}_t &\equiv \frac{\partial {\z}_{\T_t}(S_{t})}{\partial \T_t}\Big(\gamma_t \lambda_t \e^{\eta}_{t-1} + \nabla\V_{\w_t}(S_t) -{\z}_{\T_t}(S_{t})\Big)\,,\nonumber\\
\T_{t+1} &= \T_{t} + \alpha^{\T}_t \Delta^{\z}_t  \textbf{}\,,
\end{align}
or using a different mixture parameter $\tilde{\eta}$ for learning the trace:
\begin{align}
\e^{\tilde{\eta}}_t = (1 - \tilde{\eta}) \z_{\T_t}(S_t) + \tilde{\eta}[\gamma_t\lambda_t \e^{\tilde{\eta}}_{t-1} + \nabla \V_{\w_t}(S_t)]\,,\nonumber
\end{align} 
 keeping $\eta$ just for determining when to use the model. So updating $\T$ with
\eq{
\Delta^{\z}_t &\equiv \frac{\partial {\z}_{\T_t}(S_{t})}{\partial \T_t}\Big(\gamma_t\lambda_t\e^{\tilde{\eta}}_{t-1}  + \nabla \V_{\w_t}(S_t)-{\z}_{\T_t}(S_{t})\Big)\,,\nonumber\\
\T_{t+1} &= \T_{t} + \alpha^{\T}_t \Delta^{\z}_t \nonumber \,,
}
where $\alpha^{\T}_t\in (0,1)$ is a (possibly time-varying) step-size parameter and $\tilde{\eta}$ is analogous to $\lambda$, but for learning traces rather than values, using TD methods when $\tilde{\eta} = 0$, and Monte-Carlo regression when $\tilde{\eta}=1$. Note that, like TD(0), we always include at least one sampled transition, even if $\tilde{\eta} = 0$.

\section{Proofs and derivations}
\subsection{New derivation of off-policy Q($\lambda$)}\label{A:New derivation of Off-policy}
Let $\rho(s, a) = \frac{\pi(a|s)}{\mu(a|s)}$ be the importance sampling ratio between the target policy $\pi$ and the behaviour $\mu$.
In the trajectory context, we may write for any $k$ and $t$: $\rho_k^t = \prod_{j=k}^{t}\rho_j, \text{ with } \rho_j = \rho(S_j, A_j)$,
with the convention that $\rho_k^t = 1$ for $t<k$.

With this notation, the off-policy importance sampled $\lambda$-return for state-dependent $\lambda$ and $\gamma$ can be written as:
\begin{align}
G_t^{\lambda,\pi} &= (1 - \lambda_{t+1}) G^{\pi}_{t:t+1} + \lambda_{t+1}(1-\lambda_{t+2})G^{\pi}_{t:t+2} + \dots \nonumber\\
&= (1-\lambda_{t+1})\rho_{t}[R_{t+1} + \gamma_{t+1} \V(S_{t+1})] + (1-\lambda_{t+2})\lambda_{t+1}\rho_{t}[R_{t+1} + \gamma_{t+1}\rho_{t+1} R_{t+2} + \gamma_{t+1}\rho_{t+1}\gamma_{t+2} \V(S_{t+1}) + \nonumber\\
&= \rho_{t}\left(R_{t+1} + (1-\lambda_{t+1})\gamma_{t+1}\V(S_{t+1})\right) + \lambda_{t+1}\gamma_{t+1}\rho_{t}\rho_{t+1}[R_{t+2} + (1-\lambda_{t+2})\gamma_{t+2}\V(S_{t+2})] + \dots \nonumber\\
&=\sum_{k=0}^{\infty}\left(\prod_{j=0}^{k-1} \gamma_{t+j+1}\lambda_{t+j+1}\rho_{t+j}\right)\rho_{t+k}\big[R_{t+k+1}+(1-\lambda_{t+k+1})\gamma_{t+k+1}\V(S_{t+k+1})\big]\nonumber
\end{align}
This is the return from some state $S_t$ onward. For the return from some state $S_t$ and action $A_t$, we can just drop the final importance sampling ratio, yielding:
\eq{
G_t^{\lambda,\pi} &=\sum_{k=0}^{\infty}\left(\prod_{j=0}^{k-1} \gamma_{t+j+1}\lambda_{t+j+1}\rho_{t+j}\right)\big[R_{t+k+1}+(1-\lambda_{t+k+1})\gamma_{t+k+1}\V(S_{t+k+1})\big].
}
For Q-learning the bootstrapped target is: $\V(S_{t+k+1}) \equiv \max_a \Q(S_{t+k+1}, a)$, whereas for SARSA we have: $\V(S_{t+k+1}) \equiv  \E_\pi[\Q(S_{t+k+1}, A)]$.

Let 
\eq{
\Delta^{\Q}_t &= (G^{\lambda,\pi}_t - \Q_\w(S_t, A_t)) \nabla \Q_\w(S_t, A_t) \\
&= G^{\lambda,\pi}_t  \nabla \Q_\w(S_t, A_t) - \Q_\w(S_t, A_t) \nabla \Q_\w(S_t, A_t)\,,
}
be the update to the parameters $\w$ at time step $t$.

We look at the first part of the update, which can be rewritten as:
\begin{align}
\sum_{t=0}^{\infty} G^{\lambda,\pi}_t  \nabla \Q_\w(S_t, A_t) &= \sum_{t=0}^{\infty} \sum_{k=0}^{\infty} \left(\prod_{j=0}^{k-1} \gamma_{t+j+1} \lambda_{t+j+1}\rho_{t+j}\right) \big[R_{t+k+1} + (1-\lambda_{t+k+1})\gamma_{t+k+1}\V(S_{t+k+1})\big]\nabla \Q_\w(S_t, A_t) \nonumber\\
&=  \sum_{t=0}^{\infty} \sum_{k=t}^{\infty} \left(\prod_{j=t}^{k-1} \gamma_{j+1} \lambda_{j+1}\rho_{j}\right) \big[R_{k+1} + (1-\lambda_{k+1})\gamma_{k+1}\V(S_{k+1})\big]\nabla \Q_\w(S_t, A_t) \nonumber\\
&=  \sum_{k=0}^{\infty} \sum_{t=0}^{k} \left(\prod_{j=t}^{k-1} \gamma_{j+1} \lambda_{j+1} \rho_{j}\right) \big[R_{k+1} + (1-\lambda_{k+1}) \gamma_{k+1}\V(S_{k+1})\big]\nabla \Q_\w(S_t, A_t)\nonumber \\
&=  \sum_{t=0}^{\infty} \sum_{k=0}^{t} \left(\prod_{j=k}^{t-1} \gamma_{j+1} \lambda_{j+1}\rho_{j}\right)\big[R_{t+1} + \gamma_{t+1}(1-\lambda_{t+1})\V(S_{t+1})\big]\nabla \Q_\w(S_k, A_k) \nonumber\\
&=  \sum_{t=0}^{\infty} \big[R_{t+1} + \gamma_{t+1}(1-\lambda_{t+1})\V(S_{t+1})\big]\underbrace{\sum_{k=0}^{t} \left(\prod_{j=k}^{t-1} \gamma_{j+1} \lambda_{j+1}\rho_{j}\right)\nabla \Q_\w(S_k, A_k)}_{\e_t},\nonumber
\end{align}
with
\begin{align}
\e_t &= \sum_{k=0}^{t} \left(\prod_{j=k}^{t-1} \gamma_{j+1} \lambda_{j+1}\rho_{j}\right)\nabla \Q_\w(S_k, A_k) \nonumber\\
&=\nabla \Q_\w(S_t, A_t) + \gamma_t \lambda_t \rho_{t-1} \nabla \Q_\w(S_{t-1}, A_{t-1}) + \dots\nonumber\\
&= \gamma_t\lambda_t\rho_{t-1} \e_{t-1} + \rho_t \nabla \Q_\w(S_t, A_t)\nonumber
\end{align}
Then, the full update is:
\begin{align}
\sum_{t=0}^{\infty} \Delta^Q_t &= \sum_{t=0}^{\infty} \underbrace{\left[R_{t+1} + \gamma_{t+1}(1-\lambda_{t+1}) \V(S_{t+1})\right]}_{R^{\lambda}_t}\e_t - \Q_\w(S_t, A_t) \nabla \Q_\w(S_t, A_t)\,,\nonumber
\end{align}
which is what we use in the new versions of Q($\lambda$) and QET($\lambda, \eta$) algorithms.

When using this definition with the QET algorithm,
one important difference to the prior algorithm introduced by \citet{van2020expected} is that we estimate the decayed \emph{previous} expected trace, so approximating $\z_{\T}(s) \approx \E_{\pi}[ \gamma_t \lambda_t \e_{t-1} \mid S_t = s]$, and then using $\z_{\T}(s) + \nabla \Q_{\w}(s, a)$ as the trace for action $a$.  This avoids having to condition the expected trace on the action, which can significantly reduce computation in some implementations.

\subsection{Sparse expected eligibility traces}
\label{A:Sparse expected eligibility traces}
\begin{proposition}
Sparse accumulating traces with binary state weighting functions are equivalent to temporally extended backward models.
\end{proposition}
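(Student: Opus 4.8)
The plan is to prove the equivalence constructively: unroll the sparse accumulating trace, exploit the binary structure of $\omega$ to show that only the ``selected'' states contribute gradients, and then identify the conditional expectation of this object with the temporally-extended backward model of \citet{chelu2020forethought}, where the intervening unselected states are absorbed into temporally-extended reverse transitions.

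First I would unroll the selective trace recursion from Eq.~\eqref{eq:weighted_eligibility_trace},
\begin{align}
\tilde{\e}_t = \sum_{k=0}^{t}\left(\prod_{j=k+1}^{t}\gamma_j\lambda_j\right)\omega(S_k)\,\nabla\V_{\w_k}(S_k)\,,\nonumber
\end{align}
and observe that, because $\omega(S_k)\in\{0,1\}$, every summand with $\omega(S_k)=0$ vanishes. Hence the trace accumulates gradient contributions only at the subsequence of \emph{selected} states $\{S_k:\omega(S_k)=1\}$, while the decay $\prod_{j=k+1}^{t}\gamma_j\lambda_j$ still runs over all intervening steps. This is precisely the sense in which the trace is \emph{sparse}: unselected states are skipped as sources of credit but still contribute to the discount accumulated along the path.

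Next I would take the conditional expectation defining the sparse expected trace. Using the decayed-previous-trace form of Eq.~\eqref{eq:selective_qet},
\begin{align}
\tilde{\z}_{\T}(s) = \E_\pi\!\left[\,\sum_{k=0}^{t-1}\left(\prod_{j=k+1}^{t}\gamma_j\lambda_j\right)\omega(S_k)\,\nabla\V_{\w_k}(S_k)\;\Big|\;S_t=s\right]\,,\nonumber
\end{align}
which depends only on predecessors of $s$ and is therefore a genuine backward object. I would then group the expectation by the \emph{most recent} selected predecessor: conditioning on the last selected state before $s$ being some $s'$, the contribution factorizes into (i) the gradient $\nabla\V_{\w}(s')$ at that selected state and (ii) the expected product of discounts accumulated along the unselected segment leading from $s'$ to $s$. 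The latter is exactly a temporally-extended backward transition weight --- the $(\gamma\lambda)$-discounted expected measure of paths from $s'$ to $s$ passing only through unselected states, i.e.\ an option-like reverse transition that jumps over the skipped region of the state space.

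Finally I would match this grouped expression term by term against the definition of the expected temporally-extended backward model in \citet{chelu2020forethought}: identifying the selected states with the states over which the backward model is defined, the expected discount over unselected segments with the model's temporally-extended reverse transition kernel, and $\nabla\V_\w$ with the feature the backward model predicts. The recursion that the selected-state structure induces on $\tilde{\z}_{\T}$ then coincides with the Bellman-style recursion defining the backward model, which establishes the equivalence. The main obstacle is the bookkeeping of the temporal abstraction: one must verify that the discount accumulated over an unselected segment matches the backward model's transition/termination convention exactly (including how $\gamma$ and $\lambda$ combine into a single effective option discount), and handle the boundary cases where $s$ itself is selected or where no selected predecessor exists, so that the two recursions agree for every $s$ rather than merely in form.
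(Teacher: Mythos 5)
Your plan is in the right spirit---it is essentially a path-space (unrolled) version of the argument---but it omits the one hypothesis the equivalence actually rests on: the coupling $\omega(s) = 1 - \gamma(s)\lambda(s)$ between the weighting and the discount/trace-decay (cf.\ Eq.~\eqref{eq:coupling_2}). The paper's proof is a one-step computation: by the Markov property,
\begin{align*}
\tilde{\z}(s) \;=\; \E_\pi[\tilde{\e}_t \mid S_t = s] \;=\; \omega(s)\,\nabla\V_{\w}(s) \;+\; \gamma(s)\lambda(s)\sum_{\tilde{s}} P_\pi(s|\tilde{s})\,\tilde{\z}(\tilde{s})\,,
\end{align*}
and substituting $\gamma(s)\lambda(s) = 1-\omega(s)$ turns this into $\tilde{\z}(s) = \omega(s)\,c(s) + (1-\omega(s))\sum_{\tilde{s}} P_\pi(s|\tilde{s})\,\tilde{\z}(\tilde{s})$ with multi-dimensional cumulant $c(s)=\nabla\V_{\w}(s)$, which is exactly the recursion of a temporally-extended (option) backward model whose binary termination structure is determined by $\omega$. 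Without that substitution, the two weights $\omega(s)$ and $\gamma(s)\lambda(s)$ are unrelated, do not form a termination/continuation pair, and the recursion is not an option model at all.

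This is not deferred ``bookkeeping''; its absence breaks your central step. In your unrolled sum, the term for a selected state $S_k$ is multiplied by $\prod_{j=k+1}^{t}\gamma_j\lambda_j$, which does not vanish merely because some later state is selected; hence \emph{every} selected predecessor contributes, each with its own accumulated $\gamma\lambda$ discount. Your grouping by the \emph{most recent} selected predecessor---keeping only the gradient at the last selected state $s'$ and a discount over the unselected segment---silently discards the contributions of all earlier selected states, and these do not vanish in general. They vanish exactly when $\gamma\lambda = 0$ at selected states, i.e.\ under the coupling; and your ``expected product of discounts along the unselected segment'' becomes a genuine backward path (sub-)probability---hence identifiable with an option's reverse transition kernel---exactly when $\gamma\lambda = 1$ at unselected states, again the coupling. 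So the verification you postpone to the end is precisely the hypothesis you must import at the start; once you do, the trace collapses to the single gradient at the most recent selected predecessor, and your argument becomes a longer route to the same recursion the paper obtains in one line.
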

\begin{proof}
From the definition of the selective expected trace, we have
\eq{
\tilde{\z}(s) &= \E_\pi[\tilde{\e}_t|S_t=s]\\
&= \E[\gamma_t\lambda_t \tilde{\e}_{t-1} + \nabla \V_{\w_t}(S_t) \omega_t |S_t = s]\\
&= \nabla \V_{\w_t}(s) \omega(s) + \gamma(s) \lambda(s) \sum_{\tilde{s}} P_\pi (s|\tilde{s})\tilde{\z}(\tilde{s}) 
}
If $\omega(s) = 1 - \gamma(s) \lambda(s)$ (cf. Eq.~\eqref{eq:coupling_2}), then
\begin{align}
\tilde{\z}(s) \!=\! \omega(s) \nabla \V_{\w_t}(s) \!+\! (1\!-\!\omega(s))\sum_{\tilde{s}} P_\pi (s|\tilde{s})\tilde{\z}(\tilde{s})  \label{eq_A:backward_model}
\end{align}
In Eq.~\eqref{eq_A:backward_model}, considering $c(s) \equiv \nabla \V_{\w_t}(s)$ a multi-dimensional cumulant and $\beta(s)\equiv 1-\omega(s)$ the probability of termination at state $s$, then $\tilde{\z}(s)$ can be interpreted as a temporally-extended option model for an option defined as: $o = (\pi, \beta)$, where $\pi$ is the option's policy and $\beta$ is the corresponding binary termination function.

\end{proof}

\subsection{Weightings for distribution correction: off-policy expected emphasis}\label{A:Weightings for distribution correction: off-policy expected emphasis}
We consider methods that learn the expectation of the follow-on weighting $f(s) = \E_\mu[F_t|S_t=s]$, resulting in a state-dependent selectivity function to be used in place of the history-dependent weighting $\omega_t$.
We call the emphatic algorithm, analogous to $\lambda$-discounted TD, resulting from the aforementioned approach, X-ETD($\lambda$) (where ``X'' comes from ``expected''):
\begin{align}
\omega_t &= \rho_t m_t \,, \text{with }\nonumber\\
m_t & = \lambda_t i_t + \gamma_t(1-\lambda_t)
f(s)\nonumber\,,
\end{align}
The expected follow-on can be estimated using a function $f_\vphi \approx f$, with learnable parameters $\vphi$. Learning methods, typically used to learn value functions, can be applied, by reversing the direction of time in the learning update, similarly to expected eligibility traces, e.g., Monte-Carlo regression on the instantaneous follow-on trace 
\eqq{
    \vphi_{t+1} = \vphi_t + \alpha^{\vphi}_t (F_t - f_{\vphi_t}(S_t))\nabla_{\vphi_t} f_{\vphi_t}(S_t) \label{eq_A:followon_regression}\,,
}
or backward TD:
\eqq{
    \vphi_{t+1} = \vphi_t + \alpha^{\vphi}_t (f_\vphi(S_{t-1}) + i_t - f_{\vphi_t}(S_t))\nabla_{\vphi_t} f_{\vphi_t}(S_t) \label{eq_A:followon_backwardtd}\,,
}
with $\alpha^{f}_t$ a (possibly time-varying) step-size.

Unfortunately, both methods can be problematic. The Monte-Carlo regression problem has targets with infinite variance, so it is not guaranteed to converge, whereas the backward TD method
suffers from ``off-policiness'', same as value learning. 

Similarly to the case of expected eligibility traces, an interpolation between the instantaneous follow-on trace and its estimated expectation is possible using a mixture trace:
\eq{
F^\eta_t & = (1 - \eta^{F}) f_\vphi(S_t) + \eta^{F} \gamma_t \rho_{t-1}F_{t-1} + i_t\,,
}
with $\eta^{F}$ the mixing parameter, interpolating between using the expected follow-on $f(s)$ or the history-dependent instantaneous follow-on trace $F_t$.
The mixture trace $F^\eta$ is then used in the selectivity function $\omega$ as:
\eq{
\omega_t = \rho_t(\underbrace{\lambda_t i_t + (1-\lambda_t) F^\eta_t}_{m_t}) \,,
}
with $m_t$ the mixed emphasis. We would then use $\omega$ to learn the value parameters $\w$.

A second mixture trace $e_t^{f}$ can be used as target in the estimation of the expected follow-on $f_\vphi$ (same as for expected traces), one that uses a different mixture parameter $\eta^f$ (possibly different from $\eta^F$, distinguishing the way the trace is learned from how it is used:
\begin{align}
e^{f}_t & = (1 - \eta^{f}) f_\vphi(S_t) + \eta^{f} \gamma_t \rho_{t-1}F_{t-1} + i_t\,, \label{eq_A:followon_mixture}
\end{align}

The expected mixture trace is then used as target for an estimated model s.t. $f_\vphi \approx \E[\gamma_t \rho_{t-1} e^f_{t-1}|S_t=s]$, with $\eta^{f}$ interpolating between Monte-Carlo regression on the instantaneous trace ($\eta^{f}=1$), and backward TD by bootstrapping on the expected trace ($\eta^{f}=0$). 
The follow-on trace learning process updates the trace parameters $\vphi$ with:
\eq{
\vphi_{t+1} & = \vphi_{t} + \alpha^{f}_t 
\left(\gamma_t \rho_{t-1}e^f_{t-1}-f_{\vphi_t}(S_{t})\right)\nabla_{\vphi_t} f_{\vphi_t}(S_t)\,,
}
with $\alpha^{f}_t$---the step size. We use X($\eta^{f}$)-ETD($\lambda$) to explicitly denote the mechanism used to learn the expected follow-on trace, with $\eta^{f} = 1$ for Monte-Carlo regression, and $\eta^{f} = 0$ for backward TD($0$).


\subsection{Weightings for on-policy learning: on-policy expected emphasis}\label{A:Weightings for on-policy learning: on-policy expected emphasis}
Emphatic TD uses a state weighting function of the form:
\eq{
\omega_t &= M_t \rho_t\\
&= \rho_t\lambda_t i_t + \rho_t (1-\lambda_t)F_t, \text{ with } F_t = \gamma_{t} \rho_{t-1} F_{t-1} + i_t \\
&= \rho_t i_t + \rho_t\gamma_{t} (1 - \lambda_{t})\sum_{k=1}^{t}\left(\prod_{j=k}^{t}\rho_{t-j}\gamma_{t-j}\right) i_{t-k}\\
 &= \rho_t i_t + \rho_t \gamma_t (1-\lambda_t)\rho_{t-1} i_{t-1} +  \rho_t \gamma_t (1 -\lambda_t) \rho_{t-1}\gamma_{t-1} \rho_{t-2} i_{t-2} \dots,
}

For the on-policy learning, we have $\rho_t = 1, \forall t$. For on-policy learning and uniform interest $i_t = 1, \forall t$, then, the state weighting becomes:
\eq{
\omega_t &= M_t \\
&= \lambda_t + (1 - \lambda_t) F_t, \text{ with } F_t = \gamma_{t} F_{t-1} + 1 \\
&= 1 + \gamma_t (1 - \lambda_t) \sum_{k=1}^{t} \left(\prod_{j=k}^t \gamma_{t-j} \right)
}
The expected emphasis in this setting is:
\eq{
f(s) &= \lim_{t\to\infty} \E_\pi[F_t | S_t =s]\\
&= \E_\pi\left[1 + \sum_{k=1}^{t} \prod_{j=k}^t \gamma_{t-j}\mid S_t= s\right]
}
\paragraph{Constant $\gamma$}{
Furthermore, for constant $\gamma$ we have:
\eq{
f(s) &= 1 + \gamma + \gamma^2 + \dots = \frac{1}{1 - \gamma}
}
Replacing the expected emphasis in the definition of the weighting $\omega$:
\eq{
\omega_t &= \lambda_t + (1 - \lambda_t)/(1-\gamma)\\
&= (\lambda_t - \lambda_t \gamma + 1 - \lambda_t)/(1-\gamma) \\
&= (1 - \lambda_t \gamma)/(1-\gamma)  \\
\implies
\lambda_t &= (\gamma\omega_t + (1 - \omega_t))/\gamma
}
For constant $\gamma$, we can omit the denominator, since it would be just re-scaling the update by a constant factor which can be folded into the learning rate, yielding:
\eq{
\omega_t = 1 - \gamma\lambda_t
\implies \lambda_t = \gamma \omega_t + (1 - \omega_t)
}
}
\paragraph{Adaptive $\gamma$}{
Let $\P_\pi$ be the transition matrix induced by following policy $\pi$, with $[\P_\pi]_{s,s^\prime} = P(s^\prime|s,a) \pi(a|s)$, and $[\P_\pi^\top]_{:,s} = P(\cdot|s,a) \pi(a|s)$ the vector corrsponding to all entries of the succesor states of $s,a$. Let $\bGamma$, and $\bLambda$ be diagonal matrices, the former representing the discount matrix---with diagonal entries $\gamma(s)$, and the latter the trace-decay matrix---using $\lambda(s)$ on its diagonal. Let $\d_\pi$ denote the vector, with entries coresponding to the stationary distributions $d_\pi(s)$. Then, in matrix notation, the following hold: 
\eq{
\P_\pi^{\top} \d_\pi &= \sum_s d_\pi(s) [\P_\pi^{\top}]_{:, s} = \d_\pi \implies
\bGamma \P_\pi^{\top} \d_\pi = \bGamma \d_\pi \\
\P_\pi^{\top} \bGamma \d_\pi &= \sum_s \gamma(s) d_\pi(s) [\P_\pi^{\top}]_{:, s}\,.
}
Furthermore, we can write the stationary distribution under $\pi$ reweighted by the follow-on weighting, in matrix notation:
\eq{
\d_\pi^f &= (\I - \bGamma \P_\pi^{\top})^{-1} \d_\pi \\
&= \d_\pi + \bGamma \P_\pi^{\top} \d_{\pi} + ((\bGamma\P_{\pi})^2)^{\top} \d_{\pi} + \dots\\
&= \d_\pi + \bGamma \P_\pi^{\top} \d_{\pi} + \bGamma\P_{\pi}^{\top} \bGamma\P_{\pi}^{\top}\d_{\pi} + \dots
}
Assuming we can bound the discount factor $\gamma(s)$ with $\beta_\lambda$, $\forall s \in \S$, s.t. $\gamma(s) \leq \beta_\lambda$,  then:
\begin{align}
\P_\pi^{\top} \bGamma \d_\pi &\leq \beta_\lambda \sum_s d_\pi(s) [\P_\pi^{\top}]_{:, s} = \beta_\lambda \d_\pi \implies \bGamma\P_\pi^{\top} \bGamma \d_\pi < \beta_\lambda^2 \d_\pi \label{eq:beta_bound}
\end{align}
Using the assumption in Eq.~\ref{eq:beta_bound}, we obtain the expected follow-on trace:
\eqq{
\d_\pi^f
&= \d_\pi + \bGamma \d_{\pi} + \bGamma \P_{\pi}^{\top} \bGamma \d_\pi + \dots\nonumber \\
&= (1 + \beta_\lambda + \beta_\lambda^{2} + \dots)\d_\pi \nonumber\\
&= (1 - \beta_\lambda)^{-1}\d_\pi \label{eq_A:d_f_beta}
}
Let $\Omega$ be a weighting matrix, subject to the constraint:
\begin{align}
\Omega &= \bLambda  + (\I - \bLambda)(\I - \bGamma \P_\pi^{\top})^{-1}\nonumber \\
&= (\I - \bLambda \bGamma \P_\pi^{\top})(\I - \bGamma \P_\pi^{\top})^{-1} \label{eq_A:matrix_coupling}\\
\implies \bLambda &= 
\I  - \Omega(\I - \bGamma \P_\pi^{\top})(\bGamma \P_\pi^{\top})^{-1} \nonumber\\
&= (\Omega \bGamma \P_\pi^{\top} +  (\I - \Omega))(\bGamma \P_\pi^{\top})^{-1}\nonumber
\end{align}
Inserting the result from Eq.~\eqref{eq_A:d_f_beta} in Eq.~\eqref{eq_A:matrix_coupling}, we obtain:
\begin{align}
\omega_t & = \frac{1 - \gamma_t\lambda_t}{1-\beta_\lambda} \nonumber\\
\implies \lambda_t &= \frac{\beta_\lambda \omega_t + (1 - \omega_t)}{\gamma_t} = \frac{1 - \omega_t(1-\beta_\lambda)}{\gamma_t} = \frac{1 - \omega_t + \omega_t\beta_\lambda}{\gamma_t}\nonumber
\end{align}
The constant discount weighting can be recovered by making $\beta_\lambda = \gamma$ for constant $\gamma$.
}
\subsubsection{Stability \& convergence}{
For the on-policy case we can show that this coupling between the weighting and the trace decay parameter is sufficient for stability. Moreover, because the traces are on-policy and they do not have importance sampling ratios, the variance is always finite, so the coupling also ensures convergence, not just stability of the value learning process.

Let the value function $\V_\w$ be a linear function of the form: $\V_\w(s) = \w^\top \x(s)$ with $\w \in \R^n$ learnable parameters, where $\x : \S \rightarrow \R^n$ is a feature mapping. Let $\X \in \R^{|\S|\times n}$ be the feature matrix whose rows are the vectors $\x(s)$ for different states $s$.

Consider the semi-gradient learning update for the selective  TD($\lambda, \omega$) algorithm:
\begin{align}
    \w_{t+1} &= \w_{t} + \alpha_{t} \tilde{\e}_t (\underbrace{R_{t+1} \!+\! \gamma_t \V_\w(S_{t\!+\!1}) \!-\! \V_\w(S_t)}_{\delta_t}) \label{eq_A:TD_lambda_update}\,, 
\end{align}
with $\tilde{\e}_{t} \!=\! \gamma_t \lambda_t \tilde{\e}_{t-1} \!+\! \omega_t \nabla\V_\w(S_{t})$, and $\delta_t$ shorthand for the TD error, $\tilde{\e}_t$ the selective instantaneous eligibility trace, $\alpha_t$ is the step size parameter, $\omega : \S \to [0, 1]$ is the weighting function, $\lambda : \mathscr{S} \to [0, 1]$ is the trace-decay function, $\gamma: \mathscr{S} \to [0, 1]$ is the temporal discounting function. Additionally, we assume the following mild conditions: 
\begin{enumerate}
    \item the state space is finite
    \item the feature function $\x : \S \to \mathbb{R}^n$ s.t. the $\X \in \mathbb{R}^{|S|\times n}$ has
linearly independent columns, with bounded variance;
    \item the rewards are bounded;
    \item the step-size sequence satisfies the Robbins-Monro conditions \cite{robbins1951stochastic}: $\sum_{t=0}^{\infty}\alpha_t = \infty$ and $\sum_{t=0}^{\infty}\alpha_t^2 < \infty$;
    \item $\gamma : \S \to [0, 1]$ s.t. $\prod_{k=1}^{\infty} \gamma(S_{t+k}) = 0$, w.p. $1$, $\forall t > 0$;
    \item $\omega : \S \to [0, 1]$ s.t. $\prod_{k=1}^{\infty} \omega(S_{t+k}) \neq 0$, w.p. $1$, $\forall t > 0$;
    \item experience is sampled on-policy from the Markov chain ($S_t, A_t, R_{t+1}, S_{t+1}) \sim d_\pi(S_t) \pi(A_t|S_t) P(S_{t+1}|S_t, A_t)$ with stationary distribution $d_\pi$.
\end{enumerate}
\begin{proposition}\label{prop_A:i_lambda}
For $\V_\w(s) = \w^\top \x(s)$, the semi-gradient update of selective TD($\lambda, \omega$) in Eq.~\eqref{eq_A:TD_lambda_update} with $\omega(s) = 1 - \gamma(s)\lambda(s)$ converges to the fixed point:
\begin{align}
\w^* = \E\left[\left(\sum_{k=0}^{t}\gamma_{t-k}^{(k)}\lambda_{t-k}^{(k)}\omega_{t-k}\x_{t-k}\right)\left(\gamma_{t+1}\x_{t+1} - \x_t\right)^\top\right]^{-1} \E\left[\left(\sum_{k=0}^{t}\gamma_{t-k}^{(k)}\lambda_{t-k}^{(k)}\omega_{t-k}\x_{t-k}\right)R_{t+1}\right]\,,
\end{align}
with $\gamma_t^{(k)} = \prod_{j=t+1}^{t+k} \gamma_j$, $\lambda_t^{(k)} = \prod_{j=t+1}^{t+k} \lambda_j$.
\end{proposition}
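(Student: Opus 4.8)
The plan is to treat the semi-gradient recursion \eqref{eq_A:TD_lambda_update} as a linear stochastic-approximation iteration and to identify its limit via the ODE method. First I would unroll the selective-trace recursion $\tilde{\e}_t = \gamma_t\lambda_t\tilde{\e}_{t-1} + \omega_t\x_t$ to obtain the explicit sum $\tilde{\e}_t = \sum_{k=0}^{t}\gamma_{t-k}^{(k)}\lambda_{t-k}^{(k)}\omega_{t-k}\x_{t-k}$, which is exactly the bracketed quantity in the proposition, with $\gamma_t^{(k)} = \prod_{j=t+1}^{t+k}\gamma_j$ and the analogous product for $\lambda$. Substituting the linear TD error $\delta_t = R_{t+1} + (\gamma_{t+1}\x_{t+1}-\x_t)^\top\w_t$ and writing $\tilde{\e}_t\delta_t = \tilde{\e}_t R_{t+1} - \tilde{\e}_t(\x_t - \gamma_{t+1}\x_{t+1})^\top\w_t$ exposes the affine structure $\tilde{\e}_t\delta_t = \b_t - \A_t\w_t$. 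Taking expectations in the stationary regime gives $\A = \E[\tilde{\e}_t(\x_t - \gamma_{t+1}\x_{t+1})^\top]$ and $\b = \E[\tilde{\e}_t R_{t+1}]$, matching (up to the standard sign convention) the matrix and vector inverted in the statement, with fixed point $\w^* = \A^{-1}\b$.

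Next I would invoke a standard linear stochastic-approximation theorem. Under assumptions (1)--(7), ergodicity of the on-policy chain on a finite state space yields a unique stationary distribution $\d_\pi$ and geometric mixing; bounded rewards and bounded-variance features together with the termination condition (5) and the non-degeneracy condition (6) make $\A$ and $\b$ finite and well defined; and the Robbins--Monro step-size conditions (4) \citep{robbins1951stochastic} guarantee that the iterate $\w_t$ asymptotically tracks the mean ODE $\dot{\w} = \b - \A\w$. Convergence to the unique equilibrium $\w^* = \A^{-1}\b$ then follows from \citet{Tsitsiklis_VanRoy:97}, \emph{provided} every eigenvalue of $\A$ has strictly positive real part, i.e. $\y^\top\A\y > 0$ for all $\y\neq 0$. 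This reduces the entire claim to establishing positive definiteness of $\A$.

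The main obstacle, and the step where the coupling $\omega(s) = 1 - \gamma(s)\lambda(s)$ does all the work, is precisely this positive-definiteness argument; with an arbitrary weighting $\A$ can be indefinite and the iteration diverges, exactly as in the on-policy divergence examples preceding \cref{subsection:Selectivity for on-policy TD}. I would rewrite $\A$ in matrix form as $\A = \X^\top\M(\I - \bGamma\P_\pi)\X$, where $\M = \diag(\m)$ collects the effective per-state emphasis produced by the weighted trace under $\d_\pi$. The key computation is to show that with $\omega(s) = 1 - \gamma(s)\lambda(s)$ the induced emphasis $\m$ coincides with the expected follow-on weighting, i.e. $\m$ solves the balance equation $\m^\top(\I - \bGamma\P_\pi) = \d_\pi^\top$, so that $\m^\top = \d_\pi^\top(\I - \bGamma\P_\pi)^{-1} = (\d_\pi^f)^\top$ as derived in \cref{A:Weightings for on-policy learning: on-policy expected emphasis}. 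This is the state-dependent-$\lambda,\gamma$ analogue of the emphatic-TD key-matrix identity.

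Given the balance equation, positive definiteness follows from the classical emphatic-weighting argument \citep{sutton2016emphatic}. Since $\M = \diag(\m)$, the column sums $\ones^\top\M(\I - \bGamma\P_\pi) = \m^\top(\I - \bGamma\P_\pi) = \d_\pi^\top$ are all positive, while the row sums $\M(\I - \bGamma\P_\pi)\ones = \M(\I - \bGamma)\ones$ have entries $m(s)\,(1-\gamma(s)) > 0$ because $\P_\pi\ones = \ones$ and $\d_\pi > 0$. A matrix of this form, with strictly positive row and column sums, is positive definite, and linear independence of the columns of $\X$ (assumption (2)) transfers this to $\A = \X^\top\M(\I - \bGamma\P_\pi)\X$, completing the argument. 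I expect the two delicate points to be (i) justifying the interchange of the $t\to\infty$ limit and the expectation when passing from the finite trace sum to its stationary value, which relies on the termination and non-degeneracy conditions (5)--(6) for uniform integrability, and (ii) verifying the row- and column-sum signs carefully so that the positive-definiteness bound is strict rather than merely weak.
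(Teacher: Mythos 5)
Your overall skeleton---unroll the selective trace, expose the affine structure $\tilde{\e}_t\delta_t = \b_t - \A_t\w_t$, invoke Theorem~2 of \citet{Tsitsiklis_VanRoy:97}, and reduce the whole claim to positive definiteness of $\A$---is exactly the paper's route, and your fixed-point identification is correct. The gap is in the one step you yourself identify as doing ``all the work.'' Your decomposition $\A = \X^\top\M(\I - \bGamma\P_\pi)\X$ with $\M$ \emph{diagonal} is not available. In stationarity the correct form (which the paper derives) is $\A = \X^\top\K\X$ with key matrix $\K = \tilde{\D}_\pi(\I - \P_\pi\bGamma\bLambda)^{-1}(\I - \P_\pi\bGamma)$, where $\tilde{\D}_\pi = \diag(d_\pi(s)\,\omega(s))$; the dense factor $(\I - \P_\pi\bGamma\bLambda)^{-1}$ contributed by the trace cannot be folded into a diagonal emphasis sitting to the left of $(\I-\bGamma\P_\pi)$. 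Concretely, for constant $\gamma$ and $\lambda\equiv 1$ (so $\omega\equiv 1-\gamma$) the key matrix is the diagonal matrix $(1-\gamma)\D_\pi$, which equals $\M(\I-\gamma\P_\pi)$ for no diagonal $\M$; and for $\lambda\equiv 0$ (so $\omega\equiv 1$) it is $\D_\pi(\I - \P_\pi\bGamma)$, whose diagonal weight is $\d_\pi$ rather than the follow-on distribution, contradicting your balance equation $\m^\top(\I-\bGamma\P_\pi)=\d_\pi^\top$. What the coupling $\omega = 1-\gamma\lambda$ actually delivers is a telescoping identity on the \emph{other} side of the trace-propagation matrix: $\tilde{\D}_\pi = \D_\pi(\I - \bGamma\bLambda)$ together with $\d_\pi^\top\P_\pi = \d_\pi^\top$ gives $\ones^\top\tilde{\D}_\pi(\I - \P_\pi\bGamma\bLambda)^{-1} = \d_\pi^\top$, hence column sums $\ones^\top\K = \d_\pi^\top(\I-\P_\pi\bGamma) = \d_\pi^\top(\I - \bGamma) > 0$. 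That is the paper's computation, and it is not equivalent to your $\m = \d_\pi^f$ claim.

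Second, your closing criterion---``a matrix with strictly positive row and column sums is positive definite''---is false. The matrix
\begin{align}
\begin{pmatrix} \epsilon & 1 \\ 1 & \epsilon \end{pmatrix}, \qquad 0 \le \epsilon < 1,
\end{align}
has positive row and column sums but eigenvalues $\epsilon \pm 1$, so it is indefinite. Positive definiteness requires, in addition, the sign pattern of $\K$: non-negative diagonal and non-positive off-diagonal entries (together with non-negative row sums). For the correct key matrix these follow because $(\I - \P_\pi\bGamma\bLambda)^{-1}(\I - \P_\pi\bGamma) = \I - \P^\lambda_\pi$ with $\P^\lambda_\pi$ sub-stochastic---the paper cites Lemma~4 of \citet{white2017unifying} for precisely this---and only then does the emphatic argument of \citet{sutton2016emphatic} convert positive column sums into positive definiteness. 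With these two repairs (the correct key-matrix form and telescoping identity, plus the sign conditions) your argument becomes the paper's proof; the remaining ingredients you list (stochastic-approximation conditions, bounded-variance traces, fixed point by unrolling) are handled the same way there.
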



\begin{proof} 
Firstly, under the assumptions 1-4 above, the stochastic algorithm TD($\lambda, \omega$) (Eq.~\eqref{eq_A:TD_lambda_update}) behaves like the corresponding expected update equation under the on-policy stationary distribution $d_\pi$:
\eqq{
{\w}_{t+1} &= {\w}_{t} + \alpha_{t}\left(\E_{d_\pi}[\tilde{\e}_t R_{t+1}] - \E_{d_\pi}[\tilde{\e}_t (\x_t - \gamma_t \x_{t+1})^\top] {\w_t}\right), \label{eq_A:expected_update}
}
and ${\w}_t$ converges with probability 1 to ${\w}^*$, the solution to the expected update equation \eqref{eq_A:expected_update}.

\paragraph{Stability}{
We begin by showing stability of TD($\lambda, \omega$).
Let $Z_t = (S_t, A_t, \tilde{\e}_t)$ for $t \geq 0$ be the Markov chain resulting from adding $\tilde{\e}$ to the stationary Markov chain  $\{(S_t, A_t)\}_{t=0}^{\infty}$ with transition probabilities given by $P_\pi$, s.t.:
\eq{
\tilde{\e}_t = \sum_{k=0}^{t}\left(\prod_{j=0}^{k-1}\gamma_{t-j}\lambda_{t-j}\right)\omega_{t-k}\x_{t-k}\,.
    }
Since $\tilde{\e}_t$ and $\x_{t+1}$ are deterministic functions of $(S_t, A_t)$ and the distribution of $s_{t+1}$ only depends of $s_t$, the resulting chain $Z_t$ is Markov. Let $\E_{d_{\pi}}[\cdot]$ denote the expectation with respect to the steady state distribution $d_\pi$.

Let $\A(Z_t) = \tilde{\e}_t (\x_t - \gamma_t \x_{t+1})^\top$, $\b(Z_t) = \tilde{\e}_t R_{t+1}$ and $\A = \E_{d_\pi}[\tilde{\e}_t (\x_t \!-\! \gamma_t \x_{t+1})^\top]$, $\b = \E_{d_\pi}[\tilde{\e}_t R_{t+1}]$.
}
The fixed point equation of the deterministic system is:
\eq{
{\w}_{t+1} \!=\! (\I - \alpha_{t}\A){\w}_{t} + \alpha_{t}\b\! \label{eq_A:fixed_point_eq}
}

Since these methods are not true gradient methods, the asymptotic behaviour of any TD algorithm generally depends on a stability criteria that requires the eigenvalues of the iteration matrix $\A$ have positive real components \cite{sutton2016emphatic}.
The stochastic algorithm converges if and only if the deterministic algorithm converges \cite{sutton1988learning} and if both algorithms converge, they converge to the same fixed point.

Let $\bLambda$ and $\bGamma$ be matrices with diagonal entries corresponding to the functions $\lambda$ and $\gamma$. 
Let 
$$\P^\lambda_\pi = (\I - \P_\pi \bGamma\bLambda)^{-1}\bGamma (\I - \bLambda)\P_\pi\,,$$
such that:
\eqq{
(\I  - &\P_\pi\bGamma\bLambda)^{-1}(\I - \P_\pi\bGamma) =\\
&= (\I - \P_\pi\bGamma\bLambda)^{-1}(\I -\P_\pi\bGamma\bLambda + \P_\pi\bGamma\bLambda - \P_\pi\bGamma)\\
&=(\I -\P_\pi\bGamma)^{-1}(\I - \P_\pi\bGamma\bLambda + \P_\pi\bGamma(\bLambda - \I)) \\
&= \I - (\I - \P_\pi \bGamma\bLambda)^{-1}\bGamma (\I - \bLambda)\P_\pi\\
&= \I - \P^\lambda_\pi \label{eq_A:P_lambda}\,.
}

Expanding the $\A$ and $\b$ matrices we have:
\eq{
\A &=\X^\top \underbrace{ \tilde{\D}_\pi (\I - \P_\pi \bGamma \bLambda)^{-1} (\I -  \P_\pi \bGamma)}_{\K} \X,\\
\b &= \X^\top \tilde{\D}_\pi(\I - \P_\pi \bGamma \bLambda)^{-1} \r,
}
where $\tilde{\D}_{\pi}$ is a diagonal matrix with elements $d_{\pi}(s) \omega(s)$ on the diagonal. Following \citet{sutton2016emphatic}, we refer to $\K$ as the \emph{``key matrix''}.

To ensure convergence regardless of the representation function, we use the assumption that $\X$ is full rank and require \citealp[cf.][]{sutton2016emphatic} that:
\begin{enumerate}
    \item the diagonal entries of $\K$ are non-negative
    \item the off-diagonal entries are non-positive
    \item the row sums are non negative
    \item the columns sums are positive
\end{enumerate}

Conditions (1-3) follow from Lemma 4 of \citet[][]{white2017unifying} and the fact that $\tilde{\D}_\pi$ is a non-negative diagonal weighting matrix.

For the last condition, if we assume $\omega(s) > 0, \forall s\in\S$, then, similarly to \citep{sutton2016emphatic}, we have:
\eq{
\mathbf{1}^\top \K &= \mathbf{1}^\top\tilde{\D}_\pi \left(\I \!-\!  \P_\pi \bGamma \bLambda\right)^{-1} (\I - \P_\pi\bGamma) \\
&\text{(using $\mathbf{\Omega} = \I - \bLambda\bGamma$)}\\
&=\mathbf{1}^\top \D_\pi (\I \!-\!  \bGamma\bLambda) \left(\I \!-\! \P_\pi \bGamma\bLambda\right)^{-1}(\I -\P_\pi\bGamma) \\
&= (\d_\pi^\top \!-\! \d_\pi^\top \bGamma\bLambda ) \left(\I \!-\!  \P_\pi\bGamma \bLambda\right)^{-1}(\I - \P_\pi\bGamma) \\
&\text{(using $\d_\pi^\top = \d_\pi^\top \P_\pi$)}\\
&= (\d_\pi^\top \!-\! \d_\pi^\top \P_\pi\bGamma\bLambda ) \left(\I \!-\!  \P_\pi \bGamma\bLambda\right)^{-1}(\I - \P_\pi\bGamma) \\
&= \d_\pi^\top(\I \!-\! \P_\pi\bGamma\bLambda ) \left(\I \!-\!  \P_\pi \bGamma\bLambda\right)^{-1}(\I - \P_\pi\bGamma) \\
&= \d_\pi^\top(\I - \P_\pi\bGamma) \\
&\text{(using $\d_\pi^\top = \d_\pi^\top \P_\pi$)}\\
&= \d_\pi^\top(\I - \bGamma) > 0
}
Using $\omega(s) = 1 - \gamma(s) \lambda(s)$, all components of the column sums become positive. Thus, the key matrix $\A$ is positive definite and the selective TD($\lambda, \omega$) algorithm is stable.

If $\exists s\in\S$ s.t. $\omega(s) = 0$, then we can set $\gamma(s) = \lambda(s) = 1$, which induces a new super-imposed MDP, with temporally-extended dynamics comprising of multi-step transition dynamics and multi-step cumulated rewards of the original MDP. The new MDP is then stable by invoking the argument proved above.

\textbf{Convergence of TD($\lambda_t, \omega_t$)}. Stability is a prerequisite for full convergence of the
stochastic algorithm. For full convergence, we can apply Theorem 2 from \citet{Tsitsiklis_VanRoy:97}, adapted and restated below.

\paragraph{Theorem 2 from \citet{Tsitsiklis_VanRoy:97}}{
Consider an iterative algorithm of the form:
\eq{
\w_{t+1} = \w_t + \alpha_t (-A(Z_t) \w_t + b(Z_t)),
}
where:
\begin{enumerate}
    \item the (predetermined) step-size sequence $\alpha_t$ is positive, non-increasing, and satisfies $\sum_{t=0}^{\infty} \alpha_t = \infty$ and $\sum_{t=0}^{\infty} < \infty$;
    \item $Z_t$ is a Markov process with a unique invariant distribution, and there exists a mapping $h$ from the states of the Markov process to the positive reals, satisfying the remaining conditions. Let $\E_{d_\pi}[\cdot]$ stand for the expectation with respect to this invariant distribution;
    \item $\A(\cdot)$ and $\b(\cdot)$ are matrix and vector valued functions, respectivly, for which $\A = \E_{d_\pi}[\A(Z_t)]$ and $\b = \E_{d_\pi}[\b(Z_t)]$ are well-defined and finite;
    \item the matrix $\A$ is positive definite;
    \item there exist constants $C$ and $q$ such that for all $Z$:
    \eq{
    \sum_{t=0}^{\infty} \| \E[\A(Z_t)| Z_0 = Z] - \A\| \leq C(1 + h^q(Z))\, \text{and}\\
    \sum_{t=0}^{\infty}\| \E[\b(Z_t) | Z_0 = Z] - b\| \leq C(1 + h^q(Z))\,;
    }
    \item for any $q > 1$ there exists a constant $\mu_q$ such that for all $Z, t$
    \eq{
    \E[h^q(Z_t)|Z_0 = Z] \leq \mu_q (1+h^q(Z)).
    }
\end{enumerate}
Then, $\w_t$ converges to $\w^*$, with probability 1, where $\w^*$ is the unique vector that satisfies $\A \w^* = \b$.
}

The assumptions of Theorem 2 hold in our case since the last two remaining conditions (v) and (vi), stating that the dependence of $\A(Z_t)$ and $\b(Z_t)$ on $Z_k, \forall k \leq t$ is exponentially decreasing, are satisfied by the fact that the trace iterates have bounded variance and the fact that $\{Z_t\}_{t=0}^{\infty}$ is Markov, \citealp[cf. Assumption 3,][]{Tsitsiklis_VanRoy:97}. The first follows from the fact that the range of $\omega$ and $\lambda$ is $[0, 1]$, and the second by definition of the trace.


\textbf{Fixed point.} We now examine the fixed point of the system:
\eq{
\E[\delta_t \tilde{\e}_t] = \E[\tilde{\e}_t(R_{t+1} + \gamma_t\x_{t+1}^\top\w - \x_t^\top\w)] = 0\,,
}
Unfolding the trace, we have:
\eq{
\tilde{\e}_t &= \gamma_t\lambda_t\tilde{\e}_{t-1} + \omega_t\x_t\\
&=\omega_t \x_t + \gamma_t \lambda_t(\gamma_{t-1}\lambda_{t-1}\tilde{\e}_{t-2} + \omega_{t-1}\x_{t-1})\\
&=\sum_{k=0}^{t}\left(\prod_{j=0}^{k-1}\gamma_{t-j}\lambda_{t-j}\right)\omega_{t-k}\x_{t-k}
}
which results in:
\eq{
\w^* &=\!\E\left[\left(\sum_{k=0}^{t}\gamma_{t-k}^{(k)}\lambda_{t-k}^{(k)}\omega_{t-k}\x_{t-k}\right)\left(\gamma_{t+1}\x_{t+1} \!-\! \x_t\right)^\top\right]^{-1}\E\left[\left(\sum_{k=0}^{t}\gamma_{t-k}^{(k)}\lambda_{t-k}^{(k)}\omega_{t-k}\x_{t-k}\right)R_{t+1}\right]\,,
}
with $\gamma_t^{(k)} = \prod_{j=t+1}^{t+k} \gamma_j$, $\lambda_t^{(k)} = \prod_{j=t+1}^{t+k} \lambda_j$.
\end{proof}
}

\section{Details on empirical illustrations} 
\subsection{Weightings for off-policy distribution correction}\label{A:Weightings for off-policy distribution correction}

\begin{figure}[t]
\begin{center}
        \includegraphics[scale=.22]{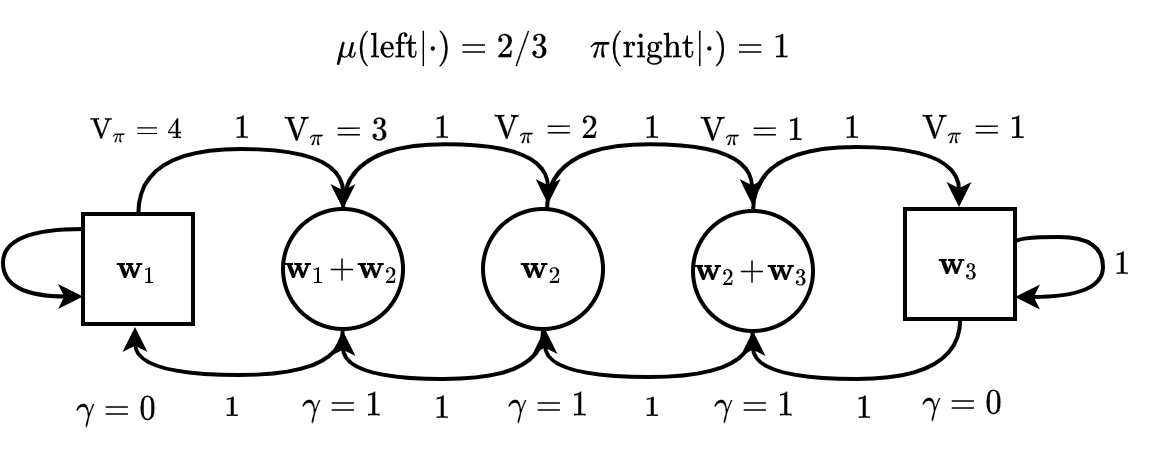}
    \end{center}
\caption{\label{fig:5state_mdp_app}
\small \textbf{The 5-state chain MDP} (cf. \citet{sutton2016emphatic}): The states shown as square cause soft termination, with
$\gamma(s) = 0$, but they do not restart the agent. There are two actions, ``left'' and ``right'', which deterministically cause transitions to the left or right except at the ends of the chain, where there may be a self-transition. The reward on all transitions is $+1$. The behaviour policy selects ``left''  2/3rds of the time in all states, which causes more time to
be spent in states on the left than on the right. The true value function $\V_\pi$ is depicted above each state. The notation $\w_i$ describes state aliasing in the observations, denoting that the $i$-th component of the current parameter vector is $\w_i$. Since there are five states and only three parameters, it is impossible, to represent the true $\V_\pi$ exactly.
}
\end{figure}
\paragraph{Experimental setup}{
We used the $5$-state MRP denoted in Fig.~\ref{fig:5state_mdp_app} to illustrate the following policy-evaluation algorithms: Off-policy TD, ETD, X($1$)-ETD, and X($0$)-ETD, described next.
}

\paragraph{Algorithms \& hyperparameters}{
The Off-policy TD algorithm is the canonical off-policy evaluation algorithm \cite{precup2001off}, without any prior distribution corrections in the form of stationary distribution ratios between the target policy and the behaviour policy; it only uses importance sampling ratios $\rho$.

The ETD algorithm is the one introduced in \cite{sutton2016emphatic}. 

For the two emphatic algorithms that use expected traces, we differentiate with $\eta^f$ (cf. \cref{A:Weightings for distribution correction: off-policy expected emphasis}) the following instances of the generic algorithm X($\eta_f$)-ETD:
\begin{itemize}
    \item X($1$)-ETD -- learns the follow-on trace with Monte-Carlo regression on the follow-on trace, cf. Eq.~\eqref{eq_A:followon_mixture} with $\eta_f$ = 1;
    \item X($0$)-ETD -- learns the follow-on by backward TD, i.e. the follow-on trace model bootstraps on itself, cf. Eq.~\eqref{eq_A:followon_mixture} with $\eta_f = 0$.
\end{itemize} 
All the algorithms are compared for $\lambda = 0$. We use no other internal discount factor beside the chain's own termination function $\gamma$. The step-sizes for the value function, for all algorithms, are decayed with $1/t^d$, where $t$ is the time-step, and $d$ is chosen from $\{0.4, 0.5, 0.8, 0.9\}$, with the best values obtained through hyperparameter search: (i) Off-policy TD: $d = 0.5$, (ii) ETD: $d = 0.9$, (iii) X($1$)-ETD: $d = 0.9$, (iv) X($0$)-ETD: $d = 0.9$. 

For the two algorithms that use expected emphasis, learning rates for the model $f_\vphi$ that estimates the follow-on trace $F$, use the same step-size decay schedule $1/t^{d_f}$, with values for $d_f$ chosen from the same interval, and the best value obtained for both algorithms $d_f = 0.5$. The model is a linear function approximator on the observations, followed by a ReLU non-linearity to keep the output positive (the latter is not important, as similar results can be obtained without it).
}
\subsection{Weightings for on-policy distribution correction -- Atari Ms.Pac-Man experiments}\label{A:Weightings for on-policy distribution correction}
For our deep reinforcement learning experiments on Atari games, we used the canonical Ms.Pac-Man to illustrate the importance of the connection between selectivity $\omega$, trace-decay $\lambda$ and trace-bootstrapping $\eta$. We start by describing the experimental setup, followed by the online selective Q($\lambda, \omega$) algorithms, after which we move on to the expected traces versions, namely QET($
\lambda, \eta, \omega$).

\paragraph{Experimental setup}{
All the Atari experiments were run with the ALE \cite{ale}, similarly to \citet{van2020expected}, including using action repeats (4x), but without downsampling (or framestacking, or pooling) the observation stream.
With probability $\epsilon$, the agent's observation is replaced with random standard Gaussian noise, to mimic a noisy observation sensor. To simulate access to a module that detects such noisy observations, we provide access to a time-dependent interest $i_t$, capturing whether an observation is noisy or not, s.t. $i_t = 0$ if the observation at time step $t$ is noisy, and $i_t=1$ otherwise. We use no other corrections, e.g. corrections to the discrepancy between the behaviour $\epsilon$-greedy and the target greedy policy, as we found those to not be useful in this setting, resulting in $\omega_t \equiv i_t$.

 We keep the discount factor $\gamma = 0.99$ constant. We report the mean return while training on $20$M frames.
 
 In all cases, we used $\epsilon$-greedy exploration (cf. \cite{sutton2018}), with an $\epsilon = 1e-2$, which we do not decay. 

We use a similar setting to \citet{van2020expected}, with the following modifications.
We apply a different feature extraction network, cf. \cite{inductive_bias}; particularly, we use $2$ convolutional layers, with $5\times 5$ kernels, stride $5$, and $64$ channels, followed by ReLU nonlinearities. The output is then passed through a $512$ linear layer, with ReLU activation. 

These experiments were conducted using Jax \cite{jax2018github}, Haiku \cite{haiku2020github} and Optax \cite{optax2020github}.
}

\begin{algorithm}[t]
  \caption{Selective Credit Assignment - QET($\lambda, \eta, \omega)$}
  \label{alg:selective_q_lambda}
\begin{algorithmic}
  \STATE {\bfseries Initialize: } $\text{policy } \pi, \w, \T, \omega, \gamma, \lambda, \eta$
  \\
  \STATE $S \sim \text{env}()$
    \FOR{$\text{each interaction } \{1,2 \dots T\}$}
        \STATE $A \sim \pi(S), R, \gamma, S^\prime \sim \text{env}(A)$
        \STATE $\T \leftarrow \T + \alpha^{\T}\frac{\partial{\tilde{\z}_\T(S)}}{\partial{\T}}(\gamma \lambda(S)\tilde{\e} - \tilde{\z}_\T(S))$
        \STATE $\tilde{\e} \leftarrow \eta_t \gamma \lambda(S) \tilde{\e} + (1 - \eta_t)\tilde{\z}_{\T}(S) + \omega(S) \nabla \Q_\w(S,A)$
        \STATE $R^\lambda = R + \gamma (1 - \lambda(S^\prime)) \Q_\w(S^\prime, a^\prime)$
        \STATE $\Delta^{\Q}= R^\lambda \tilde{\e}  - \Q_\w(S,A) \nabla \Q_\w(S,A) \omega(S)$
        \STATE $a^\prime = \arg \max_a \Q_\w(S^\prime, a)$
        \STATE $\w \leftarrow \w + \alpha^{\w}\Delta^{\Q}$
        \STATE \textbf{if} $S$ is terminal: $S \sim \text{env}()$; $\tilde{\e} \leftarrow 0$
    \ENDFOR
\end{algorithmic}
\end{algorithm}

\paragraph{Algorithms evaluated}{
We start by describing the baseline algorithm Q($\lambda$), followed by the algorithms that specifically apply selectivity.

\textbf{Q($\lambda$)}\quad
In the algorithm~\ref{alg:selective_q_lambda}, we have $\eta_t = 1\forall t$, i.e. we only use accumulating traces, so we omit line $5$, since this algorithm does not use the expected traces $\tilde{\z}_\T$. For each transition, we first decay the trace $\tilde{\e}$ and then update it using line $6$. We further compute the finite-horizon one-step return $R^\lambda$ (line $7$), where $\gamma = 0$ on termination (and then $S^\prime$ is the first observation of the next episode). Instead of the usual SGD algorithm illustrated for simplicity in line $10$, for training, we use ADAM \cite{adam}; we learn the value function with momentum$=0.9$, which simulates soft-batching; we set the other parameters from ADAM to $b_1 = 0.99$ (the exponential decay rate to track the first moment of past gradients) and $b_2 = 0.9999$ (the exponential decay rate to track the second moment of past gradients), $\epsilon = 1e-4$ (the small constant applied to denominator outside the square root--as in \cite{adam}, to avoid dividing by zero when rescaling). We use a step-size of $\alpha^{\w} = 1e-5$ for learning the value function.


\textbf{Variations of Q($\lambda, \omega$)}\quad
The algorithms used in the experiments are all variations of Q($\lambda, \omega$). We label Q($\lambda$), the default baseline algorithm that uses $\lambda=0.9$ and uniform weightings over the state space: $\omega = \omega_t = 1, \forall t$. We use Q($\lambda, \omega_t$) (with the \emph{``$t$''} subscript denoting state or time-dependence) for the algorithm that uses $\lambda=0.9$, but uses the ground-truth interest to set the weighting $\omega_t$ ($1$ for non-noisy states, and $0$ otherwise). 
Lastly, Q($\lambda_t, \omega_t$) denotes the algorithm using, in addition to the ground truth interest in setting $\omega_t$, also Eq.~\eqref{eq:coupling_2} to set $\lambda_t$.

\textbf{QET($\lambda, \eta, \omega$)}\quad
The expected-traces algorithm is similar to Q($\lambda, \omega$), except we now use the expected traces $\tilde{\z}_\T$ in place of the instantaneous traces, so we update the parameters $\T$ as well, in addition to $\w$, using ADAM, with the same hyperparameters as for the value function: momentum, $b_1$, $b_2$, and $\epsilon$.
We use the step size $\alpha^{\T} = 1e-2$ for learning the expected traces.

Similarly to \citet{van2020expected}, we also split the computation of $\Q(s, a)$ into two separate parts, such that $\Q(\w,\zeta)(s, a) = \w_a^\top \mathbf{x}_\zeta(s)$. This separation is just so that we can keep labeled separate subsets of parameters as $(\w, \zeta)$ rather than merging all of them into a single vector $\w$, using $\mathbf{x}(s)$ to denote the last hidden layer of the feature extraction part of the network, on top of which the last linear layer of the q-function is applied. We keep separate traces for these subsets, and we just apply accumulating instantaneous traces to the feature extraction network, similarly to \citet{van2020expected}. This separation is equivalent to keeping one big trace for the combined set. 
We refer the reader to \citet{van2020expected} for more details on this particularity. 
The motivation for this split in parameters is to avoid learning an expected trace for the full trace, which has millions of elements.  Instead, in practice, we only learn expectations for traces corresponding to the last layer. 

The difference from how this algorithm is presented in \citet{van2020expected} is that we only condition the function $z_\T(s)$ on the state, and not the action, due to our new derivation of the algorithm in \cref{A:New derivation of Off-policy}. 

As customary, we do not backpropagate the gradient coming from the expected traces' loss further into the feature representation.

\textbf{Variations of QET($\lambda, \eta, \omega$)}\quad
We now describe the variations of QET compared in the experiment. 
For the first versions of the algorithms, we use $\eta=0, \forall t$, i.e. using expected traces everywhere, instead of the instantaneous counterparts.
 The algorithms QET($\lambda, \eta$)-baseline, QET($\eta, \lambda, \omega_t$), QET($\eta, \lambda_t, \omega_t$) are analogous to Q($\lambda$)-baseline, Q($\lambda, \omega_t$), and QET($\lambda_t, \omega_t$), respectively, which were described in the previous section. The algorithm QET($\eta_t, \lambda_t, \omega_t$), in addition to using Eq.~\eqref{eq:coupling_2} for setting $\lambda_t$, and the ground truth interest for setting $\omega_t$, also uses Eq.~\eqref{eq:coupling_3} to set $\eta_t$, i.e. it uses the expected trace more in states where the selectivity weighting $\omega$ is higher, and the instantaneous traces more when the weighting is lower. Lastly, QET($\eta_t, \lambda_t, \omega_t, \tilde{\Delta}^{\z}_t$), in addition, uses the coupling Eq.~\eqref{eq:coupling_4} to learn expected traces constrained by selectivity.
}

\subsection{Off-policy counterfactual evaluation}
\label{A:Off-policy counterfactual evaluation}

\paragraph{Discussion on learning with function approximation}{
With function approximation, learning expected traces off-policy can be problematic. Particularly, one can learn selective expected eligbility traces $\tilde{\z}_\T$ using Monte-Carlo methods, by regressing on $\gamma_t \rho_t \lambda_t \tilde{\e}_{t-1} + \omega_t \nabla\V_{\w_t}(S_t)$, but the traces can have very high variance resulting from the product of importance sampling ratios. On the other hand, learning the traces with backward TD can easily diverge due to ``off-policiness'', from the same reasons the value learning process can diverge. We can interpolate between Monte-Carlo methods and backward TD using \emph{selective mixture traces}, similarly to regular mixture traces,  using a different mixture parameter $\tilde{\eta}$, s.t.:
\eqq{
\tilde{\e}^{\tilde{\eta}}_t = (1 - \tilde{\eta}) \tilde{\z}_{\T_t}(S_t) + \tilde{\eta}(\rho_t\gamma_t\lambda_t \tilde{\e}^{\tilde{\eta}}_{t-1} + \omega_t\nabla \V_{\w_t}(S_t))\,, \label{eq_A:mixture_selective}
} 
and updating $\T$ with
\eq{
\Delta^{\z}_t &\equiv \frac{\partial {\tilde{\z}}_{\T_t}(S_{t})}{\partial \T_t}\Big(\rho_t\gamma_t\lambda_t\tilde{\e}^{ \tilde{\eta}}_{t-1}  + \omega_t \nabla \V_{\w_t}(S_t)-\tilde{\z}_{\T_t}(S_{t})\Big)\,,
\\
\T_{t+1} &= \T_{t} + \alpha^{\T}_t \Delta^{\z}_t  \,,
}
where $\alpha^{\T}_t\in (0,1)$ is a step-size parameter and $\tilde{\eta}$ is analogous to $\lambda$, but for learning selective traces here, rather than values, using TD methods when $\tilde{\eta} = 0$, and Monte-Carlo regression when $\tilde{\eta}=1$. 

To stabilize learning, we could choose an intermediary value for $\tilde{\eta}$, that achieves an optimal balance between bias and variance. Moreover, we can also  stabilize the learning process by instantiating $\omega$ using emphatic weightings, which can guarantee convergence for linear function approximation. 
}

We now discuss the details regarding the empirical illustration in the Open world gridworld domain we used. This illustration is meant to illustrate the learning process, so we do not use function approximation, therefore we omit any non-uniform weightings.

\paragraph{Experimental setup}{
In Fig.~\ref{fig:envs}-Left, there are two goals depicted with ``G'', giving the same reward of $10 / \epsilon_r$, with probability $\epsilon_r$, otherwise $0$. All other rewards are $0$. When the agent reaches the goal, the episode restarts with a random initial location sampled from the state space.

The agent learns different expected traces for the two policies transitioning it to each of the two corners of the world: top-right and bottom-left (we may call those loosely options), while following a random behaviour policy $\mu$. 

To increase the difficulty in learning, we let each option's policy take a random action with probability $\epsilon_o = 0.2$, and let the environment transition the agent randomly in a cardinal direction with probability $\epsilon_p = 0.05$. We use one-hot state representations, so we omit emphatic weightings, and just use importance sampling ratios between each option's policy $\pi_o$ and the behaviour policy $\mu$, when learning the expected trace of each option. 
}
\paragraph{Algorithms \& hyperparameters}{We compare the off-policy TD($\lambda$) and ET($\lambda, \eta$) with uniform weightings. We clip the importance sampling ratios to $1$. We learn the traces with Monte-Carlo regression, using $\tilde{\eta}= 1$ in Eq.~\eqref{eq_A:mixture_selective}. The discount is $\gamma = 0.99$, except at the goals, where it is $0$. We set $\lambda = 0.98$. We decay step-sizes with $1/t^d$, with $d \in \{0.5, 0.7, 0.9, 1.\}$, the best values obtained through hyperparameter search: $d = 0.5$, for $\epsilon_r = 1.$, and $d = 0.9$, for $\epsilon_r = 0.001$. For the expected traces algorithm, we similarly decay the step size for the trace learning process, using $d_z = 0.001$, searched over values $\{0.1, 0.01, 0.001\}$. We learn both the value function and the traces with SGD, starting from a step size of $1$.
}

\begin{figure}
\begin{center}
        \includegraphics[scale=.335]{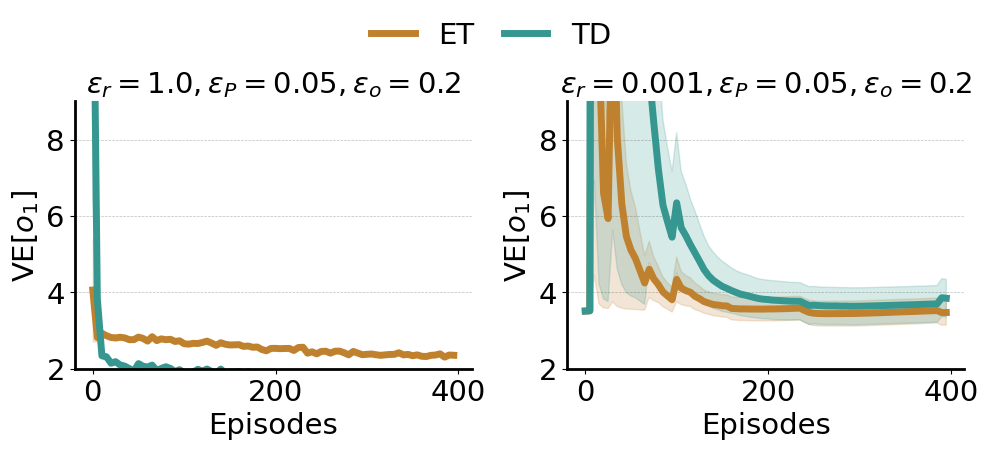}
    \end{center}
\caption{\textbf{Open World}: Value error for policy evaluation using {\color{turquoise}TD($\lambda=0.98$)} and {\color{brown}ET($\lambda=0.98$)}, for option $o_2$. 
Increasing the sparsity of the reward signal (left vs right plot), we observe policy-conditioned expected traces coped better with sparse rewards. Shaded areas show standard errors over $20$ seeds.
\label{fig:room_o2}
\small }
\end{figure}

\paragraph{Additional results}{
Fig.~\ref{fig:room_o2} illustrates the performance for the option with policy transitioning toward the bootom corner---$o_2$ (with similar results illustrated for the other policy---$o_1$, in the main text).}

\subsection{Sparse expected eligibility traces}\label{A:Empirical Sparse expected eligibility traces}
To illustrate the idea of sparse expected traces that can flow credit over the state space using temporally-extended actions, we use the following experimental setup.

\paragraph{Experimental setup}{
We use the Four Rooms domain from figure Fig.~\ref{fig:envs}-Right. We pre-learn the option policies using pre-defined interest and sub-goal functions, such that the options learn policies as illustrated in Fig.~\ref{fig:envs}-Right. The agent receives a reward of $r=10. / \epsilon_r$ with probabiltiy $\epsilon_r$ at the goal, depicted with ``G'' in Fig.~\ref{fig:envs}-Right. The discount is $\gamma=0.98$ everywhere, except at the goal where it is $0$, after which the episode restarts and the first observation of the new episode is sampled from one of the hallways.
}

\paragraph{Algorithms \& hyperparameters}{ 
The baselines we compare against are the default Q$_A$($\lambda=0.9$) and QET$_A$($\lambda=0.9, \eta=0$), which are defined over the primitive action-space. 

For the sparse learning algorithms, we assume the policy over actions is defined using a pre-specified weighting function $\omega$, s.t. $\omega(s) = 1$, if $s$ is a hallway, and $0$ otherwise. The policy over options uses only the pre-learned option space, without any primitive actions, which is sufficient for this illustration since the start states are in hallway locations, and so is the goal. 

For Q$_O$($\lambda_t, \omega_t$) and QET$_A$($\lambda_t, \eta_t, \omega_t$) (with the ``t'' subscript indicating state or time-dependence), we use Eq.~\eqref{eq:coupling_2} for setting $\lambda_t$, with $\beta_\lambda=0.9$, and similarly Eq.~\eqref{eq:coupling_3} for setting $\eta_t$, with $\beta_\eta = 0.9$. We also learn the traces themselves using Eq.~\eqref{eq:coupling_4}.
We decay all learning rates for the value function using $1/t^d$, searching over $d \in \{0.1, 0.2, 0.5, 0.7, 0.9, 1.\}$. We find the best parameters are $d=0.2$ for all algorithms. For the expected traces algorithms, we use the same scehdule $1/t^{d_z}$, with $d_z \in \{1., 0.1, 0.01, 0.01\}$, finding the best value is $d_z = 0.1$, for both algorithms. We learn both the value function and the traces with SGD, starting from a step size of $1$. For exploration we use $\epsilon$-greedy policies with $\epsilon = 0.1$. The options' policies are learned with intra-option discount factor $\gamma_O = 0.9$ and without randomness $\epsilon_O = 0.$, i.e. greedy policies. The trace learning process uses Monte-Carlo regression, i.e. $\tilde{\eta} = 1$ in Eq.~\eqref{eq_A:mixture_selective}. We do not use importance sampling ratios, or any other corrections.
}

\end{document}